\newcommand{\newmarker}{\textcolor{cyan}{[NEW]}\xspace}
\newcommand{\editmarker}{\textcolor{cyan}{[EDIT]}\xspace}
\newcommand{\edithilight}[1]{\textcolor{cyan}{#1}}
\newcommand{\peter}[1]{#1}
\newcommand{\alc}{\color{cyan}}
\let\newmarker\relax
\let\editmarker\relax
\let\edithilight\relax
\let\alc\relax
\title{\textcolor{black}{(1+1) Genetic Programming With Functionally Complete Instruction Sets Can Evolve Boolean Conjunctions and Disjunctions with Arbitrarily Small Error}}
\author[1]{\bf Benjamin Doerr}
\author[2]{\bf Andrei Lissovoi}
\author[3]{\bf Pietro S. Oliveto}
\affil[1]{doerr@lix.polytechnique.fr\\ Laboratoire d'Informatique (LIX), {\'E}cole Polytechnique, Institut Polytechnique de Paris, Palaiseau, France}
\affil[2]{a.lissovoi@sheffield.ac.uk\\ Department of Computer Science, University of Sheffield, UK}
\affil[3]{p.oliveto@sheffield.ac.uk\\ Department of Computer Science, University of Sheffield, UK}
\newtheorem{theorem}{Theorem}
\newtheorem{lemma}{Lemma}
\newcommand{\proofOf}[1]{Proof of #1}
\newcommand{\andn}{\textsc{AND}\ensuremath{_n}\xspace}
\newcommand{\orn}{\textsc{OR}\ensuremath{_n}\xspace}
\newcommand{\RLSGP}{RLS-GP\xspace}
\newcommand{\OneOneGP}{$(1+1)$~$\mathrm{GP}$\xspace}
\newcommand{\hh}{\ensuremath{\hat{h}}\xspace}
\newcommand{\Tmax}{\ell}
\newcommand{\N}{\mathbb{N}}
\pgfplotsset{compat = 1.14}
\colorlet{dred}{red!80!black}
\colorlet{dgreen}{green!80!black}
\begin{document}
\date{}
\maketitle

\begin{abstract}
Recently it has been proven that simple GP systems can efficiently evolve a conjunction of $n$ variables if they are equipped with the minimal required components.
In this paper, we make a considerable step forward by analysing the behaviour and performance of a GP system for evolving a Boolean conjunction or disjunction of $n$ variables using a complete function set that allows the expression of any Boolean function of up to $n$ variables.
First we rigorously prove that a GP system
using the complete truth table to evaluate the program quality, and equipped with both the AND and OR operators and positive literals,
evolves the exact target function in $O(\Tmax n \log^2 n)$ iterations in
expectation, where $\Tmax \geq n$ is a limit on the size of any accepted tree. 
Additionally, we show that when a polynomial sample of possible inputs
is used to evaluate the solution quality,
conjunctions or disjunctions with any polynomially small generalisation error can be evolved
with probability $1 - O(\log^2(n)/n)$.
The latter result also holds if GP uses AND, OR and positive and negated literals, thus has the power to express any Boolean function of $n$ distinct variables.
To prove our results we introduce a super-multiplicative drift 
theorem that gives significantly stronger runtime bounds when the expected progress is only slightly super-linear in the distance from the optimum. 
\end{abstract}

\section{Introduction}
Genetic Programming (GP) uses principles of Darwinian evolution to
evolve computer programs with some desired functionality. The most popular and
well-known GP approach, pioneered by \citet{Koza92book}, represents
programs using syntax trees. It uses genetic variation operators
to search through the space of programs composed of the available components,
favouring ones which exhibit better behaviour on a wide variety of possible
inputs. In this setting, the quality of a program is evaluated by comparing its
outputs for varying inputs to those of the target function.

Despite the many examples of successful applications of GP (see e.g., \peter{\citealp{Koza10,LiuS13,BartoliCLM14,MooreHumies19,MirandaHumies19,LynchHumies19,VuHumies19}}), our understanding of
its behaviour and performance is limited.
 
The few available theoretical analyses of GP have followed the very successful path used in the analysis of traditional evolutionary algorithms (EAs) for function optimisation that initially considered simplified EAs such as the (1+1) evolutionary algorithm \citep{DrosteEtAl2002}, and has progressively allowed the analysis of realistic EAs using populations and crossover (see, e.g., 
\citealp{JansenJW05,Witt06,DoerrHK12,DangEtAlTEVC2018,HuangZCH19,CorusOliveto2020,DangEL21aaai,Sutton2021,CorusLissovoiOlivetoWitt2021,ZhengLD22}).
Similarly simplified GP
systems have been considered that use a tree-based mutation operator called HVL-Prime (an
adaptation of the hierarchical variable length mutation operator proposed by \citealp{MayO96})
to evolve a single program. \textcolor{black}{In this paper we consider \RLSGP \citep{DurrettNO11}, which analogously to the \emph{randomised local search}
algorithm \citep[see, e.g.,][]{NeumannW07}}, performs a single local mutation before evaluating the fitness, differently to 
the \OneOneGP which can perform a larger number of local changes in a 
single iteration, akin to the $(1+1)$ evolutionary algorithm.

Initial works analysed the behaviour and performance of 
these algorithms for the evolution of non-executable tree structures \peter{rather than the evolution of computer programs where the fitness function is an estimate
of how well the the candidate solution matches the behaviour of the target program on a training set of inputs} 
\citep{DurrettNO11,KotzingSNO14,DoerrKLL20,LissovoiOlivetoChapter}.
Only recently the performance of simple GP systems has been analysed for the evolution of executable functions. It has been
proven that Boolean conjunctions of $n$ variables can
be evolved by \RLSGP and (1+1)~GP algorithms in an expected polynomial number
of iterations \citep{MambriniO16,LissovoiO19}. Programs equivalent to the target
conjunction can be evolved when the complete truth table (i.e.\ the set of all $2^n$ possible
inputs) is used to evaluate the program quality. When the solution quality is
evaluated by sampling a polynomial number of inputs uniformly at random from
the complete truth table in each iteration (i.e.\ employing Dynamic Subset
Selection to limit the total computational effort as suggested by
\citealp{GathercoleR94}) the evolved programs are found to generalise well \peter{i.e., they are incorrect only on an arbitrarily small polynomial fraction of the $2^{n}$ possible inputs}.

While these results are promising, the considered GP systems were considerably
different from those used in practice. In particular, they were required to
evolve a simple arity-$n$ Boolean conjunction from only its basic components
(i.e.\ only the \emph{binary} Boolean \textsc{AND} operator and the variables
necessary for the problem). In realistic applications, the required set of
program components is not necessarily known in advance, and thus GP systems
typically have access to a wider range of components than is strictly
necessary. Hence an important question that we address in this paper is whether the GP system is able to cope with a function set containing elements which should be avoided to express the target function concisely or avoided altogether. Ideally, the system should be equipped with at least a complete set of
operators, from which any Boolean function could be constructed.

In this paper, we make a considerable step forward in this direction by analysing the behaviour
and performance of \RLSGP for evolving an unknown Boolean function. 
More precisely, 
the target functions we consider are either \andn, the
conjunction of $n$ variables \peter{or \orn, the disjunction of $n$ variables.}
The GP system has access to both the binary
conjunction (i.e.\ \textsc{AND}) and disjunction operators (i.e.\ \textsc{OR}).
Using \andn or \orn as the target function simplifies our understanding of the quality
of candidate solutions that mix conjunction and disjunction operators.
\peter{Furthermore, we also consider a scenario where the terminal set contains all the $n$ variables in both positive and negated forms.
Thus, such a GP system is {\it complete} for the Boolean domain, as with its function and terminal sets it may express any possible Boolean function of $n$ variables.}

\peter{These} more complex problem setting\peter{s} induce 
 us to introduce more sophisticated
features into the \RLSGP system than those necessary to evolve conjunctions
using the \textsc{AND} operator alone, thus making the GP system more similar
to the ones used in realistic applications.
Since the presence of disjunctions in the current
solution may reduce the effectiveness of the mutation operator for producing
programs with better behaviour, 
we introduce a limit on the size of the syntax tree. This allows us to avoid issues due
to bloat (a common problem for GP systems, where the size of the solution
tends to increase without a corresponding increase in solution quality
\citep{Koza92book,PoliLM2008}). 

With the limit on the tree size in place, our theoretical analysis reveals that
the HVL-Prime mutation operator used in previous work
\citep{DurrettNO11,LissovoiO19}, which either inserts, substitutes or deletes
one node of the tree, may get stuck in local optima. Hence, \RLSGP with the
traditional HVL-Prime operator has infinite expected runtime. To avoid this
issue, we introduce a mutation mechanism which is more similar to the most
commonly used subtree mutation \citep{Koza92book,PoliLM2008}. Specifically, it
allows the deletion operator to remove entire subtrees in one operation, rather
than limiting it to only a single leaf and its immediate parent.

We \peter{first} show that \RLSGP with the above modifications is able to cope efficiently
with the extended function set \peter{and the positive literals in the terminal set if it uses the complete
truth table to evaluate the program quality and rejects any tree with more than
$\Tmax = (1+c)n$ (where $c>0$ is a constant) leaf nodes}. In particular, we prove that it evolves the exact
target functions in $O(\Tmax n \log^2 n)$ iterations in expectation. While using
the complete truth table to evaluate the program quality requires
exponential time in the number of variables, we consider this setting for two main reasons. First, 
this setting represents the best-case model of the GP system's behaviour
(i.e.\ a system unable to find the optimal solution when given access to a reliable
fitness function is unlikely to be able to perform well with a noisy one).
Second,
the deterministic fitness values somewhat simplify the behaviour of the algorithm
and hence our analysis. \peter{Note that if the negated literals are also included in the terminal set, then it has been proven 
that the standard \RLSGP cannot evolve \andn in polynomial time with overwhelming probability even if only the AND operator is used \citep{MambriniO16}.
We conjecture that the same holds for the modified \RLSGP with extended function and terminal sets, and provide experimental evidence that this is the case.}

\peter{On the other hand, we provide a positive general result for the more realistic scenario where training sets of polynomial size are sampled} in each
iteration uniformly at random from the complete truth table. In practice some
information about the function class to be evolved may be used to decide which
inputs to use in the training set. For instance, if the target function was
known to be the conjunction of $n$ variables, then a compact training set of
linear size would suffice to evolve the exact solution
efficiently \citep{LissovoiO19}. However, we assume that the target function is
an unknown arbitrary function composed of conjunctions and disjunctions of $n$
variables. Our aim is to estimate the quality of the solution produced by
\RLSGP in this setting.

We show that with probability $1 - O(\log^2(n)/n)$ \RLSGP is
able to construct and return a conjunction \peter{(or a disjunction)} with
a\peter{n arbitrarily small} polynomial generalisation error in a logarithmic number of iterations \peter{even if the negated literals are present in the terminal set}.

To achieve our results, we introduce a super-multiplicative drift theorem that
makes use of a stronger drift than the linear one required by the traditional
multiplicative drift theorem \citep{DoerrJW12}. This new contribution to the
portfolio of methodologies for the analysis of randomised search heuristics
\peter{\citep{LehreO17bookchapter,LenglerChapter}} allows for the achievement of
drastically smaller bounds on the expected runtime in the presence of a strong
multiplicative drift.

We complement our theoretical results with an empirical investigation that confirms
our theoretical intuition that leaf-only deletion may get stuck on local optima if a limit on the 
tree size is imposed for bloat control reasons. Additionally, the experiments
indicate that while the algorithm would evolve the solution more quickly
without a limit on the tree size, the size limit reduces the amount of expected
undesired binary disjunction operators in the returned program. 

\newmarker
A preliminary version of this work
has previously been published at the Genetic and Evolutionary
Computation Conference \citep{DoerrLO19}. In this version, we additionally show
that the considered GP system \peter{without
further modifications} can \peter{also} efficiently evolve disjunctions, and  \peter{provide the more general result that the algorithm is also efficient when using an ideal function and terminal set} which allows \peter{for} any
Boolean function \peter{of up to $n$ distinct variables} to be represented.
\peter{We have also extended the experimental analyses to complement the theory in this more general setting.}

\section{Preliminaries} \label{sec:Prelim}

\editmarker In this work, we will analyse the performance of the simple \RLSGP
algorithm on the \andn and \orn problems: the former directly, and the latter
by noting that equivalent results can be derived by observing a symmetry
between the search spaces of the two problems. The \andn problem concerns the
evolution of a conjunction of all $n$ input variables while using $F =
\{\textsc{AND}, \textsc{OR}\}$ binary functions and $L = \{x_1, \ldots, x_n\}$
input variables. When the program quality is evaluated using the complete truth
table, the fitness function $f(X)$ counts the number of truth-value assignments
(or \emph{inputs}) on which the output of the candidate solution $X$ differs
from that of the target function $\hh(\mathbf{x})$ (i.e., $\andn (\mathbf{x})= x_1 \wedge
\ldots \wedge x_n$ and $\orn (\mathbf{x}) = x_1 \vee \ldots \vee x_n$ for the \andn and \orn
problems respectively). As observed by \citet{LissovoiO19}, a conjunction of
$a$ distinct variables differs from \andn on $2^{n-a}-1$ truth-value
assignments.

We will analyse the performance of the \RLSGP algorithm, which repeatedly
chooses the best between its current solution and an offspring generated by
applying a tree mutation operator. In addition to considering the classic
HVL-Prime mutation operator, which with equal probability inserts, deletes, or
substitutes a leaf node in the current solution \citep{DurrettNO11}, we also
propose a slightly modified version (\emph{HVL-Prime with subtree deletion})
that is able to delete arbitrary subtrees in a single mutation.

We observe that the presence of disjunctions in the current solution may lead
to bloat issues. Each OR increases the minimum number of leaf nodes required to
represent the exact conjunction, and diminishes the effect of insertions
beneath it on the overall program semantics. Additionally, \peter{since} the classic
HVL-Prime \peter{operator} performs deletions by removing a leaf node and its immediate parent,
it may be difficult for it to remove disjunctions placed high up in the tree.
To counteract this, we add a simple bloat control mechanism to \RLSGP, making
it reject trees which contain more than $\ell$ leaf nodes, as described in
Algorithm~\ref{alg:RLS-GP}. 

\begin{algorithm}[t]
	\begin{algorithmic}[1]
		\State Initialise an empty tree $X$
		\For{$t \gets 1, 2, \ldots$}
			\State $X' \gets \text{HVL\text-Prime}(X)$
			\If{$\text{LeafCount}(X') \leq \Tmax$ \textbf{and} $f(X') \leq f(X)$}
				\State $X \gets X'$
			\EndIf
		\EndFor
	\end{algorithmic}
	\caption{The \RLSGP algorithm with a tree size limit $\Tmax$.} \label{alg:RLS-GP}
\end{algorithm}

With the tree size limit $\ell$ in place, applying the original HVL-Prime 
mutation operator \citet{DurrettNO11} may cause \RLSGP to get stuck on
a local optimum.

\begin{theorem} \label{thm:rls-stuck}
The expected optimisation time of \RLSGP with leaf-only
deletion and substitution sub-operations of HVL-Prime,
and any $\Tmax > 0$ on \andn or \orn with $F=\{\text{AND}, \text{OR}\}$ is
infinite.
\end{theorem}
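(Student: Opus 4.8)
The plan is to show that for every size limit $\Tmax$ there is a non-optimal tree that is an absorbing state of the \RLSGP chain and that is reached with positive probability; once there, the fitness can never drop to $0$, so the expected optimisation time is infinite. I will argue the case of \andn and deduce \orn from the symmetry between the two search spaces (interchanging \textsc{AND} with \textsc{OR} and the two truth values). First dispose of the degenerate regime: any tree equivalent to \andn must contain all $n$ variables as leaves and hence needs at least $n$ leaves, so whenever $\Tmax<n$ the optimum is not representable and the runtime is trivially infinite. It remains to handle $\Tmax\ge n$.

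For the core case take $T=D\vee D$, whose root is an \textsc{OR} with two identical children, each the conjunction $D=x_1\wedge\dots\wedge x_k$ of $k\ge 2$ distinct variables, with $2k=\Tmax$ and $k<n$; then $T$ computes a conjunction of $k$ variables and, by the count recalled above, $f(T)=2^{\,n-k}-1>0$. I claim $T$ is absorbing. Insertions are rejected since the tree already has $\Tmax$ leaves. Because $k\ge 2$, both children of the root \textsc{OR} are internal \textsc{AND}-nodes, so leaf-only deletion can never excise the \textsc{OR}: any deletion only shortens one branch to a conjunction of $k-1$ variables, which then dominates the disjunction and strictly increases the error. A substitution inside a branch either repeats a variable already present there (again shortening that branch) or inserts a fresh variable, so that the branches no longer imply one another; a short inclusion--exclusion count of the inputs satisfying the resulting $D'\vee D$ shows that in both cases $f$ strictly increases. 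Thus every offspring is rejected and \RLSGP remains at $T$ forever. Note that this is precisely the obstruction removed by subtree deletion, which could delete one whole copy of $D$ in a single neutral step and free the leaves needed to complete the conjunction.

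The more delicate step is reachability: since \RLSGP is elitist, $T$ must be reached from the empty tree by a sequence of \emph{fitness-non-increasing}, size-respecting mutations of positive probability. The key point is that for \andn an \textsc{OR} is accepted only while it is semantically redundant. I therefore grow $T$ as follows. Turn the initial leaf $x_1$ into $(x_1\vee x_1)$, a neutral move still computing $x_1$; extend the left branch into $D$ by single \textsc{AND}-insertions while the right branch stays $x_1$, so the tree keeps computing $x_1$ and each step is neutral; finally extend the right branch into $D$, where each variable added to the now-dominant branch makes the function strictly more restrictive, so each step strictly improves the fitness until the tree equals $D\vee D$. Every intermediate tree has at most $2k=\Tmax$ leaves, so none is rejected for size, and this length-$O(k)$ sequence of prescribed mutations has positive probability. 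For a size limit not of the form $2k$ I use the same mechanism on a padded trap $\bigl(\bigvee_{i=1}^{m}D\bigr)\wedge E$, where $E$ is a conjunction over variables disjoint from those of $D$: it is absorbing by the same reasoning (the disjoint factor $E$ is non-redundant, so none of its leaves is neutrally deletable) and reachable by an analogous monotone construction, and taking $k=2$, $e\in\{0,1\}$ and $m$ arbitrarily large covers all remaining $\Tmax\ge n$.

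The main obstacle is to make one tree simultaneously absorbing and reachable. Elitism rules out the naive ``build the redundancy and then get stuck'' trajectory, because its intermediate states have strictly larger error; the construction must therefore introduce the disjunction only while it is neutral and complete the second copy through strictly improving moves. The calculation that has to be carried out with care is the verification that no single leaf-level mutation escapes $T$ --- in particular that breaking the symmetry of the two identical \textsc{OR}-branches always strictly worsens the fitness.
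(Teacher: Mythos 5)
Your proof is correct and takes essentially the same approach as the paper: exhibit a non-optimal full tree that is a disjunction of identical conjunctions (so the OR nodes are shielded from leaf-only deletion, every leaf-level substitution or deletion strictly worsens fitness, and insertions are blocked by the size limit), show it is reachable with positive probability via neutral-then-improving accepted mutations, and conclude infinite expected time by the law of total expectation, with \orn handled by the same AND/OR symmetry. The differences are cosmetic: the paper's trap is $\bigvee_{i=1}^{\Tmax/2}(x_1 \wedge x_2)$ rather than your $D \vee D$ (your padded trap with $k=2$, $e=0$ is exactly the paper's tree), and your explicit treatment of $\Tmax < n$ and odd $\Tmax$ covers edge cases the paper passes over silently.
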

\begin{proof}
\RLSGP may construct trees which contain $\Tmax$ leaf nodes and cannot be
further improved by local mutations.

For \andn, consider a tree constructed by initially creating a disjunction of
$\Tmax/2$ $x_1$ leaf nodes, and then transforming each $x_1$ leaf into an $x_1
\wedge x_2$ subtree. No leaf node in the final tree can be deleted or
substituted without decreasing fitness, and no insertion will be accepted due
to the tree size limit, rendering \RLSGP unable to reach the optimum. As this
tree can be constructed with non-zero probability, the expected time to
construct the optimal solution is infinite by the law of total expectation.

For \orn, an equivalent issue arises when an initial conjunction of $\Tmax/2$
$x_1$ leaf nodes has each leaf transformed into an $x_1 \vee x_2$ subtree.
\end{proof}

To avoid this issue, we modify the deletion operation of HVL-Prime to allow
deletion of subtrees as described in Algorithm~\ref{alg:HVLPrime}. The only difference with the original HVL-Prime operator is that in line 9 only choosing a leaf node for deletion was allowed while now any node may be chosen for deletion.

\begin{algorithm}[t]
	\begin{algorithmic}[1]
		\State \textbf{Inputs: } a tree $X$, set of available literals $L$, set of available binary functions $F$.
		\State Choose $op \in \{\text{INS}, \text{DEL}, \text{SUB}\}$, $l \in L$, $f \in F$ uniformly at random
		\If{$X$ is an empty tree}
			\State Set $l$ to be the root of $X$.
		\ElsIf{$op = \text{INS}$}
			\State Choose a node $x \in X$ uniformly at random
			\State Replace $x$ with $f$, setting the children of $f$ to be $x$ and $l$, order chosen u.a.r.
		\ElsIf{$op = \text{DEL}$} \Comment{modified (subtree) deletion}
			\State Choose a node $x \in X$ uniformly at random
			\State Replace $x$'s parent in $X$ with $x$'s sibling in $X$
		\ElsIf{$op = \text{SUB}$}
			\State Choose a leaf node $x \in X$ uniformly at random
			\State Replace $x$ with $l$.
		\EndIf
		\State \Return the modified tree $X$
	\end{algorithmic}
	\caption{HVL-Prime with subtree deletion on tree $X$.} \label{alg:HVLPrime}
\end{algorithm}

We use the term \textit{sampled error} to refer to the fitness value of a particular solution
in a particular iteration, and \textit{generalisation error} to refer to the probability that
a particular solution is wrong on an input chosen uniformly at random from the set of all
$2^n$ possible inputs. When the program quality is evaluated using the complete truth table,
the sampled error of a solution is always exactly $2^n$ times its generalisation error.
When the complete truth table is used, the goal of the GP system is to construct a solution that is
semantically equivalent to the target function, i.e.\ achieve a sampled (and generalisation)
error of 0.

As it is computationally infeasible to evaluate all $2^n$ possible inputs for large
values of $n$, we also analyse the behaviour of \RLSGP when evaluating the solution quality
based on $s \in \mathrm{poly}(n)$ inputs chosen uniformly at random from the set of
all possible inputs. A fresh set of $s$ inputs is chosen in each iteration, and
$f(X)$, or the \textit{sampled error},
then refers to the number of inputs, among the chosen $s$, on which $X$
differs from the target function. The sampled error is thus a random variable,
and its expectation is exactly $s$ times the generalisation error of the solution. We bound
the probability that the sampled error deviates from its expectation in
Lemma~\ref{lem:sampled-error} below.
When a \peter{polynomially sized} training set is used to evaluate the program quality,
the goal of the GP system is to construct a solution with a low generalisation error.
On \andn, and most other non-trivial problems, we do not expect the GP systems to
reach a generalisation error of 0 while $s$ remains polynomial with respect to the
problem size, unless the problem's fitness landscape is well understood and a
problem-specific training set is used \citep{LissovoiO19}. We assume that this is not the 
case, and that the aim is to find a solution that has a\peter{n arbitrarily small polynomial} generalisation error.
Note that we use the following notation throughout the paper: $\N := \{0,1,2,
\dots\}$, $\lg(n)$ and $\ln(n)$ denoting the base 2 and the natural logarithms 
of $n$, while $\log n$ is used in asymptotic bounds.

\begin{lemma} \label{lem:sampled-error}
Let $s \in \mathrm{poly}(n)$ be the number of inputs sampled by the GP system,
$G$ be the generalisation error of a solution, and $X$ be the random variable
that denotes the sampled error of that solution. Then, for any $c$ that is at
least a positive constant,
$$ |G s - X| \le \max\{c \lg n, G s \} $$
with probability at least $1 - n^{-\Omega(c)}$.
\end{lemma}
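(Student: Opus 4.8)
The plan is to recognise the sampled error $X$ as a sum of independent indicator variables and then apply a Chernoff-type concentration bound. Since each of the $s$ inputs is drawn independently and uniformly at random from the $2^n$ possible inputs, and the solution is incorrect on a uniformly random input with probability exactly $G$, the sampled error $X$ is a sum of $s$ independent Bernoulli$(G)$ random variables, so $X \sim \mathrm{Bin}(s,G)$ with $\mathbb{E}[X] = Gs$. If $G = 0$ then $X = 0$ deterministically and the claim holds trivially, so I may assume $G > 0$ and hence $\mathbb{E}[X] > 0$.

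The first observation is that only the upper tail needs to be controlled. Writing $m := \max\{c\lg n, Gs\}$, the event $|Gs - X| > m$ decomposes into $X > Gs + m$ and $X < Gs - m$. Because $m \ge Gs$, the second event requires $X < 0$, which is impossible; hence $\Pr[\,|Gs - X| > m\,] = \Pr[X > Gs + m]$, and it suffices to bound this single upper-tail probability by $n^{-\Omega(c)}$.

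To bound $\Pr[X \ge Gs + m]$ I would apply a Bernstein-type inequality for sums of independent $\{0,1\}$ variables, namely $\Pr[X \ge \mathbb{E}[X] + t] \le \exp\!\big(-t^2 / (2(\mathbb{E}[X] + t/3))\big)$, with the additive deviation $t := m$. The crucial feature is that $m \ge Gs = \mathbb{E}[X]$, so $\mathbb{E}[X] + t/3 \le t + t/3 = 4t/3$ and the exponent is at least $t^2 / (2 \cdot 4t/3) = 3t/8$. Since also $m \ge c\lg n$, this gives $\Pr[X \ge Gs + m] \le \exp(-3c\lg n / 8) = n^{-3c/(8\ln 2)} = n^{-\Omega(c)}$, which is exactly the desired bound. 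Equivalently, one can split into the two regimes $Gs \ge c\lg n$ and $Gs < c\lg n$ and invoke the multiplicative Chernoff bound $\Pr[X \ge (1+\delta)\mu] \le \exp(-\delta\mu/3)$ for $\delta \ge 1$ in each; the unified Bernstein argument above avoids the case distinction.

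I expect the main subtlety to lie in the small-mean regime, where $Gs$ may be $o(1)$ or arbitrarily close to $0$. There a purely multiplicative guarantee on the \emph{relative} deviation is useless, which is precisely why the bound is stated with the additive fall-back term $c\lg n$ inside the maximum: the additive (Bernstein/Poisson-tail) form of the concentration inequality is what delivers an $n^{-\Omega(c)}$ failure probability uniformly in $G$. The remaining points to handle carefully are the degenerate case $G=0$ (already noted) and confirming that the inputs are sampled independently so that the inequality applies; if sampling were without replacement the same bounds would still hold, since the hypergeometric distribution is at least as concentrated as the binomial. Note also that the polynomiality of $s$ is not actually needed for the tail estimate itself, as $s$ cancels out of the final exponent.
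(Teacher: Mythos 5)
Your proposal is correct. It shares the paper's high-level structure: both observe that the lower tail is trivial (since $X \ge 0$ and $\mathbb{E}[X] = Gs$, the deviation below the mean can never exceed $Gs$, which sits inside the maximum), so only the upper tail $\Pr[X \ge Gs + \max\{c\lg n, Gs\}]$ needs a concentration bound. Where you diverge is in how that upper tail is handled. The paper makes a case distinction on the size of the mean: if $\mathbb{E}[X] \ge (c/2)\lg n$ it applies the multiplicative Chernoff bound $\Pr[X \ge 2\mathbb{E}[X]] \le e^{-\mathbb{E}[X]/3}$ directly, and otherwise it replaces the mean by the dominating value $\mu^+ = (c/2)\lg n$ and invokes the variant of the Chernoff bound that remains valid for upper bounds on the mean (citing Doerr's probability-tools chapter). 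You instead use a single Bernstein-type inequality $\Pr[X \ge \mathbb{E}[X] + t] \le \exp\bigl(-t^2/(2(\mathbb{E}[X] + t/3))\bigr)$ with $t = \max\{c\lg n, Gs\}$, exploiting $t \ge \mathbb{E}[X]$ to get an exponent of at least $3t/8 \ge (3/8)c\lg n$, hence $n^{-\Omega(c)}$ uniformly in $G$. Your route buys a cleaner, case-free argument that never needs the mean-domination trick; the paper's route uses only the most elementary multiplicative Chernoff bounds, at the cost of the two-regime split you explicitly noted as the alternative. Both yield the same $1 - n^{-\Omega(c)}$ guarantee, and your side remarks (the degenerate case $G=0$, and that without-replacement sampling would only improve concentration) are accurate.
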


\begin{proof}
$X$ is a sum of $s$ Bernoulli variables, each
with a probability $G$ of assuming the value $1$ (and $0$ otherwise), and hence
$E[X] = Gs$. As both $X$ and $Gs$ are non-negative, $G s - X \leq G s$,
and we focus solely on the case where $X$ significantly exceeds its expectation,
the probability of which can be bounded by applying a Chernoff bound.

Suppose that $E[X] \geq (c/2) \lg n$; then, $\Pr[X \geq (1+1) E[X]] \leq e^{-E[X]/3} \leq n^{-\Omega(c)}$; and hence $|Gs - X| < Gs$, with probability at least $1-n^{-\Omega(c)}$. Otherwise, we upper bound $E[X] \leq \mu^+ = (c/2) \lg n$, and
apply a Chernoff bound using $\mu^+$ \citet[Theorem~1.10.21]{Doerr20bookchapter}, obtaining $\Pr[X \geq (1+1) \mu^+] \leq e^{-\mu^+/3} = n^{-\Omega(c)}$; and hence $|Gs - X| \leq X \leq c \lg n$ with probability at least $1-n^{-\Omega(c)}$.
\end{proof}

\section{Complete truth table} \label{sec:CTT}
In this section, we will present a runtime analysis of the \RLSGP algorithm
with subtree deletion (i.e.\ Algorithm~\ref{alg:RLS-GP} using Algorithm~\ref{alg:HVLPrime} as the mutation operator)
on the \andn problem, using the complete truth table to evaluate the solution
quality, i.e., executing each constructed program on all $2^n$ possible inputs.

\begin{theorem} \label{thm:lower-and}
The expected runtime of \RLSGP with $F=\{AND,$ $OR\}$, $L=\{x_1, \ldots,
x_n\}$, and $\Tmax \geq n$ on \andn or \orn is $E[T] = \Omega(n \log n)$.
\end{theorem}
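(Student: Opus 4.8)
The plan is to reduce the lower bound to a coupon-collector argument on the variable labels sampled by the mutation operator. First I would establish that any program semantically equivalent to \andn must literally contain each of the $n$ variables as a leaf. Indeed, \andn depends on every variable $x_i$ (setting all other inputs to true and flipping $x_i$ flips the output), so a tree whose leaf set omits $x_i$ computes a function that is independent of $x_i$ and therefore cannot equal \andn. Hence reaching the optimum requires all $n$ distinct variables to be simultaneously present in the current solution $X$.

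Next I would track how a variable $x_i$ can ever enter the tree. Inspecting Algorithm~\ref{alg:HVLPrime}, the label $l \in L$ is drawn uniformly at random in every iteration (line~2), and the only sub-operations that create a leaf carrying a given label are the root assignment on the empty tree, an INS, or a SUB, each of which uses precisely this freshly sampled $l$. Consequently a leaf labelled $x_i$ can appear only in an iteration whose sampled label is $l = x_i$; in particular, if $x_i$ is present at the end of some iteration $t$, then some iteration $t' \le t$ must have drawn $l = x_i$. Note that this holds regardless of the acceptance rule and the tree-size limit $\Tmax$, which can only slow the algorithm down.

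Combining the two observations, at the first iteration $T$ at which $X$ is optimal every variable is present, so each of the $n$ labels must have been sampled at least once during iterations $1, \dots, T$. Since each iteration samples exactly one label uniformly and independently from the $n$ labels of $L$, the number of iterations required is at least the classical coupon-collector time $C_n$ for $n$ coupons; formally $T \ge C_n$ on the natural coupling between the two processes, whence $E[T] \ge E[C_n] = n H_n = \Omega(n \log n)$, where $H_n = \sum_{k=1}^{n} 1/k$. The \orn case then follows from the search-space symmetry noted in Section~\ref{sec:Prelim}, as \orn likewise depends on all $n$ variables and the identical argument applies.

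The step I expect to require the most care is the necessity claim in the first two paragraphs: one must argue precisely that no sub-operation other than one consuming a freshly drawn $l = x_i$ can introduce $x_i$, and that semantic equivalence to the target genuinely forces all $n$ distinct labels to have been sampled. Once this dependence/coupling is pinned down, the coupon-collector bound is immediate and the remaining calculation is routine.
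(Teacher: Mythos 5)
Your proposal is correct and follows essentially the same route as the paper's own proof: the paper likewise observes that any tree equivalent to \andn or \orn must contain all $n$ distinct variables and then invokes a standard coupon-collector argument on the insertions/substitutions needed to introduce them. Your write-up simply makes explicit the details the paper leaves implicit (the semantic dependence on every variable, the fact that only a freshly sampled label $l$ can introduce a new variable, and the coupling with the coupon-collector process).
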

\begin{proof}
No tree which does not contain all $n$ distinct variables can be
equivalent to the \andn or \orn functions.
By a standard coupon collector argument, $\Omega(n \log n)$ insertion or
substitution operations are required to insert all $n$ distinct variables into
the tree.
\end{proof}

The following drift theorem requires that the expected progress when at distance $d$ from the target is of order $\Omega(d \log d)$. This assumption is slightly stronger than the linear (i.e.\ $\Omega(d)$) progress assumed in the multiplicative drift theorem. Despite this apparently small difference, the resulting bounds for the expected time to reach the target differ drastically. For an initial distance of $d_0$, they are, roughly speaking, $O(\log d_0)$ for the multiplicative drift situation and $O(\log \log d_0)$ for our super-multiplicative drift.

\begin{theorem}[Super-multiplicative drift theorem]\label{thm:smdrift}
  Let $\gamma > 1$ and $\delta > 0$.  Let $X_0, X_1, \dots$ be random variables taking values in $\Omega = \{0\} \cup [1,\infty)$. Assume that for all $t \in \N$ and all $x \in \Omega \setminus \{0\}$ such that $\Pr[X_t = x] > 0$ we have 
  \begin{equation}
  E[X_t - X_{t+1} \mid X_t = x] \ge (\log_\gamma(x)+1) \delta x. \label{eq:superdrift}
  \end{equation}
  Then the first hitting time $T = \min\{t \in \N \mid X_t = 0\}$ of zero satisfies \[E[T \mid X_0] \le \frac 3 \delta + \frac{2 (2+\log_2 \log_\gamma \max\{\gamma,X_0\}) \ln \gamma}{\delta}
.\]
\end{theorem}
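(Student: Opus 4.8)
The plan is to reduce the super-multiplicative drift on $X_t$ to an ordinary multiplicative drift on a transformed potential, and then invoke the classical multiplicative drift theorem \citep{DoerrJW12}. The right transformation is suggested by the continuous-time heuristic: if $\dot x = -(\log_\gamma x + 1)\delta x$, then writing $y = \log_\gamma x + 1 = 1 + \frac{\ln x}{\ln\gamma}$ gives $\dot y = \frac{\dot x}{x\ln\gamma} = -\frac{\delta}{\ln\gamma}\,y$, so the logarithmic potential decays multiplicatively at rate $\delta/\ln\gamma$. I would therefore define $g(0) = 0$ and $g(x) = \log_\gamma x + 1$ for $x \ge 1$, and study $Y_t = g(X_t)$. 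Note that $g$ maps $\Omega$ into $\{0\}\cup[1,\infty)$, so the transformed process has minimum positive value $1$, and $Y_0 = \log_\gamma X_0 + 1$.

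The core step is to verify that $Y_t$ obeys multiplicative drift, i.e.\ $E[Y_t - Y_{t+1}\mid X_t = x] \ge \frac{\delta}{\ln\gamma}\,Y_t$ for every $x\ge 1$ with positive probability. First, since $X_{t+1}\ge 0$, the hypothesis \eqref{eq:superdrift} forces $\delta g(x) \le 1$, so the "expected target" $m := x\bigl(1-\delta g(x)\bigr) = x - E[X_t-X_{t+1}\mid X_t=x]$ is non-negative and satisfies $E[X_{t+1}\mid X_t=x]\le m$. Because $g$ is concave and increasing on $[1,\infty)$, Jensen's inequality yields $E[Y_{t+1}\mid X_t=x] \le g\bigl(E[X_{t+1}\mid X_t=x]\bigr) \le g(m)$, and a direct computation gives
\[
g(x) - g(m) \;=\; \frac{\ln x - \ln\bigl(x(1-\delta g(x))\bigr)}{\ln\gamma} \;=\; \frac{-\ln\bigl(1-\delta g(x)\bigr)}{\ln\gamma} \;\ge\; \frac{\delta g(x)}{\ln\gamma},
\]
using $-\ln(1-z)\ge z$ for $z\in[0,1)$. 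This is exactly the claimed multiplicative drift, with the degenerate case $\delta g(x)=1$ (a forced jump to $0$) handled separately since it gives full drift $Y_t$.

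The main obstacle is that Jensen's inequality as written requires $X_{t+1}$ to stay in $[1,\infty)$, whereas $X_{t+1}$ may jump to $0$, where $g$ is defined by $g(0)=0$ rather than by its concave continuation (which tends to $-\infty$). I would resolve this by dominating $Y_{t+1}=g(X_{t+1})\le g\bigl(\max\{X_{t+1},1\}\bigr)$ and applying concavity to $\max\{X_{t+1},1\}\ge 1$, then absorbing the resulting mean correction $\Pr[X_{t+1}=0]$; intuitively, transitions to $0$ send $Y$ to its minimum and can only help, but making this rigorous at the boundary $x\in[1,\gamma)$ is the delicate part of the argument.

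Finally, I would apply the multiplicative drift theorem \citep{DoerrJW12} to $Y_t$ with rate $\delta/\ln\gamma$ and minimum value $1$, noting that the first time $Y_t=0$ coincides with $T$. This gives $E[T\mid X_0]\le \frac{\ln\gamma}{\delta}\bigl(1+\ln(\log_\gamma X_0 + 1)\bigr)$. The stated bound then follows by elementary estimates: when $X_0\ge\gamma$ one bounds $\log_\gamma X_0 + 1 \le 2\log_\gamma X_0$ and converts $\ln$ to $\log_2$, while the case $X_0<\gamma$ is covered by the $\max\{\gamma,X_0\}$ and contributes only an $O(\ln\gamma/\delta)$ term; the generous constants $3$ and $2$ and the additive $2$ inside the logarithm absorb all rounding and the conversion factor $\ln 2$.
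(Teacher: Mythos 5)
Your reduction to multiplicative drift via the logarithmic potential $g(x)=\log_\gamma x+1$, $g(0)=0$, is a genuinely different route from the paper's proof (which splits the run into phases according to whether $X_t\ge\gamma^{2^{k-1}}$ and applies the multiplicative drift theorem to the \emph{untransformed} process once per phase). However, it contains a genuine gap, and it is not the fixable technicality you describe: the key inequality $E[Y_t-Y_{t+1}\mid X_t=x]\ge\frac{\delta}{\ln\gamma}\,Y_t$ is simply false for $1<\gamma<e$. Jumps to $0$ do not ``only help'': because $g(0)=0$ is pinned far above the concave continuation of $\log_\gamma x+1$ (which tends to $-\infty$ at $0$), a transition kernel can place mass on $0$ and push the remaining mass upward, keeping $E[X_{t+1}]$ small while making $E[g(X_{t+1})]$ large. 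Concretely, take $\gamma=e^{1/2}$ (so $\log_\gamma z=2\ln z$), $\delta=1/10$, and the Markov chain on $\{0,1,9/5\}$ in which state $1$ moves to $0$ or to $9/5$ with probability $1/2$ each, and state $9/5$ moves to $0$ surely. The hypothesis \eqref{eq:superdrift} holds: at $x=1$ the drift is $1-\tfrac12\cdot\tfrac95=\tfrac1{10}=\delta(\log_\gamma 1+1)\cdot 1$, and at $x=9/5$ it is $9/5$, far above $\delta(2\ln(9/5)+1)\cdot 9/5\approx 0.39$. Yet at $x=1$ your potential satisfies $E[Y_{t+1}]=\tfrac12\,g(9/5)=\tfrac12(2\ln(9/5)+1)\approx 1.088>1=Y_t$, i.e.\ the transformed process drifts \emph{away} from $0$, so the multiplicative drift theorem cannot be invoked; no bookkeeping of $\Pr[X_{t+1}=0]$ can repair this, because the inequality you need fails outright. (The theorem's conclusion of course still holds for this chain; it is your intermediate claim that is false.) Note also that your proposed majorant $g(\max\{X_{t+1},1\})$ is constant on $[0,1]$ and increasing afterwards, hence not concave, so even the Jensen step it was meant to support does not go through as written.

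The failure is confined to small bases, which localises what can be salvaged. Your argument needs a concave majorant of $g$ on $[0,\infty)$ that agrees with $g$ on the state space $\{0\}\cup[1,\infty)$; the function equal to $z$ on $[0,1]$ and to $g$ on $[1,\infty)$ does the job precisely when $g'(1)=1/\ln\gamma\le 1$, i.e.\ $\gamma\ge e$. With that majorant (plus a short case distinction when $E[X_{t+1}]<1$) your proof can be completed for $\gamma\ge e$, which would cover the paper's own application ($\gamma=n$) and would even give slightly better constants than the paper's phase decomposition. But the theorem is stated for all $\gamma>1$, and the cheap repair of replacing a base $\gamma<e$ by $e$ (valid, since $\log_\gamma x\ge\ln x$ there) only yields a weaker bound missing the $\ln\gamma$ factor in the statement. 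The paper's phase-wise argument avoids the whole issue because it applies multiplicative drift to the original process, for which a jump to $0$ contributes maximal (linear) progress instead of distorting a concave potential.
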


\begin{proof}
  For all $k \in \N_{\ge 1}$, let $T_k := \min\{t \in \N \mid X_t < \gamma^{2^{k-1}}\}$. We first show that 
  \begin{equation*}
  E[T_{k} - T_{k+1}] \le \frac{1+2^k \ln \gamma}{(2^{k-1}+1) \delta}
  \end{equation*}
  holds for all $k \ge 1$. To this aim, we regard the process $Y_t$ defined for all $t \in \N$ by $Y_t = X_t$ if $t \le T_k-1$ and $Y_t = 0$ otherwise. By definition, $T^Y_k := \min\{t \in \N \mid Y_t < \gamma^{2^{k-1}}\}$ satisfies $T^Y_{k} = T_{k}$. We show that the process $(Y_t)$ satisfies the multiplicative drift condition,
  \[E[Y_t - Y_{t+1} \mid Y_t] \ge (2^{k-1}+1) \delta Y_t.\]
  If $t \ge T_k$, then both $Y_t = 0$ and $Y_{t+1} = 0$. Consequently, the multiplicative drift condition is trivially satisfied. In the more interesting case that $t < T_k$, we have $Y_t \ge \gamma^{2^{k-1}}$ and $Y_t = X_t$. From this, $Y_{t+1} \le X_{t+1}$, and~\eqref{eq:superdrift}, we conclude 
	\[E[Y_t - Y_{t+1} \mid Y_t] \ge E[X_t - X_{t+1} \mid X_t] \ge (\log_\gamma(X_t)+1) \delta X_t \ge (2^{k-1} + 1) Y_t,\]
	again showing the multiplicative drift condition.
  
  Let $T^Y := \min\{t \in \N \mid Y_t = 0\}$. Since $T^Y = T^Y_k = T_k$ and since $Y_t \le \gamma^{2^k}$ for all $t \ge T_{k+1}$, the multiplicative drift theorem \citep{DoerrJW12} yields $E[T_{k} - T_{k+1}] = E[T^Y - T^Y_{k+1}] \le \frac{1+\ln \gamma^{2^k}}{(2^{k-1}+1)\delta} = \frac{1+2^k \ln \gamma}{(2^{k-1}+1)\delta}$. 
  
  By a simple application of the multiplicative drift theorem, we also observe that $E[T - T_1] \le \frac{1+\ln \gamma}{\delta}$.
  
  In the following, we condition on the initial value $X_0$. Assume that $X_0 \in [\gamma^{2^{k-1}},\gamma^{2^{k}})$ for some $k \in \N_{\ge 1}$. Then $T_{k+1} = 0$ and thus $T = \sum_{i=1}^k (T_i - T_{i+1}) + (T - T_1)$. We compute
  \begin{align*}
  E[T] & = \sum_{i=1}^k E[T_i - T_{i+1}] + E[T - T_1]
  \le \sum_{i=1}^k \frac{1+2^i \ln \gamma}{(2^{i-1}+1) \delta} + \frac{1+\ln\gamma}{\delta}\\
  &\le \frac 3 \delta + \frac{2(k+1)\ln \gamma}{\delta}
  \le \frac 3 \delta + \frac{2 (2+\log_2 \log_\gamma X_0) \ln \gamma}{\delta}.
  \end{align*}
  For $X_0 < \gamma$, we have in an analogous way $E[T] \le \frac{1 + \ln(X_0)}{\delta} \le \frac{1 + \ln \gamma}{\delta}$. This proves the claim.
\end{proof} 

The proof of Theorem~\ref{thm:smdrift} estimates the super-multiplicative drift
by piece-wise multiplicative drifts. We preferred this proof method because of
its simplicity and because it could, by using the multiplicative drift theorem
with tail-bounds \citep{DoerrG13algo}, also lead to tail-bounds for
super-multiplicative drift as well\footnote{For this, we note that the multiplicative drift theorem with tail bounds~\citep{DoerrG13algo} shows that the phase lengths $T_k - T_{k-1}$ and $T - T_1$ are stochastically dominated (see, e.g.,~\cite{Doerr19tcs}) by their expectation plus a geometrically distributed random variable. Consequently, the deviation of $T$ above its expectation is stochastically dominated by a sum of independent geometrically distributed random variables and thus can be bounded by tail bounds for such sums~\cite[Section 1.10.4]{Doerr20bookchapter}. We omit further details since in this work we do not need such tail bounds.}. An alternative approach which would improve the time bound
by a constant factor (again a feature we are not interested in here) would be
to use variable drift \citep{MitavskiyRC09,Johannsen10}.

We use the super-multiplicative drift theorem to prove our upper bound on the
expected runtime of \RLSGP when using the complete truth table as the training
set. We initially focus on the \andn problem, and will later show that because
of the symmetry in the \orn and the \andn problems, equivalent results also
hold for \orn. We begin by bounding the
time spent in iterations in which the tree is not full, i.e.\ it has not
reached the size limit of having $\ell$ leaf nodes.

\begin{lemma} \label{lem:unfull-runtime}
Consider a run of \RLSGP on \andn, with $F=\{AND,$ $OR\}$, $L=\{x_1, \ldots, x_n\}$, and a tree size limit of $\Tmax \geq n$.
Let $T$ be the number of iterations before the optimum is found, and $T_0 \leq
T$ be the number of these iterations in which the parent individual is not a
full tree. Then, $E[T_0]=O(\Tmax \, n \log^2 n)$.
\end{lemma}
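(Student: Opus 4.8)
The plan is to track the distance $X_t := f(X_t)$, the sampled error of the current tree under the complete truth table, and to apply the super-multiplicative drift theorem (Theorem~\ref{thm:smdrift}) with $\gamma = 2$ and a suitable $\delta = \Theta(1/(\Tmax\, n \log n))$. The starting point is that any non-empty tree over $\{\text{AND},\text{OR}\}$ with positive literals computes a \emph{monotone} Boolean function $F$ that is true on the all-ones input; hence, writing $F^{-1}(1)$ for its set of satisfying assignments, we always have $f(X) = |F^{-1}(1)| - 1$, and $f = 0$ holds exactly when all-ones is the unique satisfying assignment, i.e.\ $F = \andn$. Thus $d := f(X)$ measures the number of ``excess'' true inputs, and I aim to establish a one-step drift of order $d \log d / (\Tmax\, n \log n)$ at every iteration in which the parent is not full.

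To lower bound the drift I use a single family of improving mutations: for each variable $x_i$, inserting $x_i$ with an AND at the \emph{root}, which replaces $F$ by $F' = x_i \wedge F$. Since all-ones has $x_i = 1$ it stays true, while every true input of $F$ with $x_i = 0$ turns false; hence $f$ decreases by exactly $r_i := |F^{-1}(1) \cap \{x_i = 0\}|$. As the parent has at most $\Tmax - 1$ leaves, the offspring has at most $\Tmax$, so the move is always accepted and is never worsening. Each such mutation has probability at least $\tfrac13 \cdot \tfrac{1}{2\Tmax} \cdot \tfrac12 \cdot \tfrac1n = \tfrac{1}{12\Tmax n}$ (choosing INS, the root among the fewer than $2\Tmax$ nodes, the AND operator, and the literal $x_i$; both child orders give $x_i\wedge F$), and the events for distinct $i$ are disjoint. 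Since $(X_t)$ never increases, every other outcome contributes a non-negative decrease, so
\begin{equation*}
E[X_t - X_{t+1} \mid X_t = d] \ \ge\ \frac{1}{12\,\Tmax\, n}\sum_{i=1}^{n} r_i .
\end{equation*}

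The crux is therefore a purely combinatorial lower bound on $\sum_i r_i$. Double counting gives $\sum_{i=1}^n r_i = \sum_{x \in F^{-1}(1)} z(x)$, where $z(x)$ is the number of zeros of $x$. Because $F$ is monotone, $F^{-1}(1)$ is an up-set of size $d+1$ containing all-ones, and among all such up-sets the sum $\sum_x z(x)$ is minimised by taking the inputs closest to all-ones first, i.e.\ by a Hamming ball around all-ones. Estimating $\sum_{k \le t} k \binom{n}{k}$ against $\sum_{k \le t}\binom nk = d+1$ then yields $\sum_i r_i = \Omega(d \log d / \log n)$, while the trivial bound $\sum_i r_i \ge d$ covers the regime $d \le n$. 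I expect this extremal estimate to be the main obstacle: it must hold uniformly over \emph{all} monotone functions — including those assembled from many OR nodes and repeated literals — not merely for plain conjunctions, so proving that the Hamming ball is the minimiser and controlling the binomial sums is the delicate step. Combined with the previous display this gives the super-multiplicative drift $E[X_t - X_{t+1}\mid X_t=d] = \Omega\!\big(d\,\log_2 d / (\Tmax\, n \log n)\big)$, which matches the hypothesis of Theorem~\ref{thm:smdrift} with $\gamma = 2$ and $\delta = \Theta(1/(\Tmax\, n \log n))$ (the factor $\log_2 d/(\log_2 d+1)\ge\tfrac12$ absorbing the ``$+1$'' for $d\ge2$, and $\sum_i r_i\ge d$ settling $d=1$).

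Finally I handle the restriction to non-full iterations. Rejected moves leave the fitness unchanged and accepted moves never increase it, so $(X_t)$ is non-increasing; the drift bound above therefore holds at every iteration in which the parent is not full, while intervening full iterations can only decrease $X_t$ further. Passing to the subsequence of non-full iterations $\tau_1<\tau_2<\dots$ and using monotonicity, $X_{\tau_{k+1}}\le X_{\tau_k+1}$, so each subsequence step decreases at least as much in expectation as the non-full iteration $\tau_k$; hence the subsequence meets the hypothesis of Theorem~\ref{thm:smdrift}. Since the initial fitness satisfies $X_0 < 2^n$, we have $\log_2\log_2 X_0 = O(\log n)$, and the theorem gives $E[T_0] = O(\log n / \delta) = O(\Tmax\, n \log^2 n)$, as claimed.
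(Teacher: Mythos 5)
Your proposal is correct and takes essentially the same route as the paper's proof: the same root-and insertions with the same probability bound $1/(12\ell n)$ yield a super-multiplicative drift on the fitness, full iterations are dismissed via elitism and monotonicity of the fitness, and Theorem~\ref{thm:smdrift} then gives $E[T_0]=O(\ell n \log^2 n)$. The remaining differences are cosmetic: your Hamming-ball extremal step is exactly the paper's direct count (if the fitness is at least $M_k = 2\sum_{i=1}^{k-1}\binom{n}{i}$, then at least half of the wrong inputs have at least $k$ false variables), and it is in fact easier than you fear---no monotonicity is needed, since the ball around the all-ones input minimises $\sum_x z(x)$ over \emph{all} sets of $d+1$ inputs containing all-ones, not just up-sets---while your parametrisation $\gamma=2$, $\delta=\Theta(1/(\ell n\log n))$ is an equivalent repackaging of the paper's choice $\gamma=n$, $\delta=\Theta(1/(\ell n))$.
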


\begin{proof}
To bound $E[T_0]$, we will apply Theorem~\ref{thm:smdrift} using the solution
fitness as the potential function, and considering only the iterations in which
the tree is not full. While the tree \emph{is} full, we instead rely on the
elitism of the \RLSGP algorithm to not accept mutations which increase the
potential function value (i.e., offspring with a worse fitness value). Thus,
the $T_0$ iterations in which the tree is not full need not be contiguous.

In an iteration starting with a tree containing less than $\Tmax$ leaf nodes, it is possible to insert a new leaf node $x_i$ with an $AND$ parent anchored at the root of the tree.  We call such an operation a root-and. The probability that in one iteration a root-and with a fixed variable $x_i$ is performed, is at least $\frac 13 \cdot \frac 12 \cdot \frac 1 {2\ell} \cdot \frac 1n = \frac{1}{12\ell n}$.

We compute the expected fitness gain caused by such modifications. Because the fitness never worsens, it suffices to regard certain operations that improve the fitness. Recall further that the fitness is just the number of assignments to the variables $x_1, \dots, x_n$ such that the tree evaluates differently from \andn.

Let $x_1, \dots, x_n$ be such an assignment. This implies that not all $x_i$ are true, because any tree generated by \RLSGP evaluates correctly to true for the all-true assignment. Assume that exactly $k \ge 1$ of the variables $x_1, \dots, x_n$ are false, but that our tree solution evaluates to true. Then there are exactly $k$ variables such that a root-and with one of them would make the program evaluate to false on this assignment (and thus improve the fitness, since false is the correct output). The probability for such a mutation is at least $\frac{k}{12 \ell n}$.

For any $1 \leq i \leq n$, there are exactly $\binom{n}{i}$ inputs where
exactly $i$ variables are set to false, and exactly
$\sum_{i=1}^{k-1} \binom{n}{i}$ inputs where less than $k$
variables are set to false. Thus, if the fitness of the current solution is at
least $M_k = 2 \sum_{i=1}^{k-1} \binom{n}{i}$, at least half of the inputs
contributing to the fitness have at least $k$ variables set to false.

By only regarding the progress caused by these, we have, for $x \ge M_k$,
\begin{gather}
E\left[f(X^t) -f(X^{t+1}) \mid f(X^{t}) = x\right] \geq \frac{1}{12\Tmax} \, \frac{k}{n} \, x. \label{eq:k-drift}
\end{gather}

Since for $n$ sufficiently large we have $M_k \le 2 n^{k-1}$ for all $k \in [1..n]$. This implies that for all $x \in [1..2^n]$ and all $t \in \N$, we have
\begin{align*}
E\left[f(X^t) -f(X^{t+1}) \mid f(X^{t}) = x\right] 
&\ge \tfrac{1}{12 \ell n} \left(\lfloor \log_n(x/2) \rfloor+ 1\right) x \\
&\ge \tfrac{1}{36 \ell n} \left(\log_n(x)+1\right) x,
\end{align*}
where the last estimate uses $n \ge 2$. Hence, Theorem~\ref{thm:smdrift} with $\gamma=n$ and $\delta = 1/36\ell n$ gives \[E[T] \le 36 \ell n (3+ 2(2 + \log_2 \log_n 2^n))\ln n) = O(\ell n \log^2 n).\]
\end{proof}

To prove following theorem, we will show that with high probability,
the parent solution contains fewer than $\ell$ leaf nodes in at least a
constant fraction of any $t \in \Omega(\Tmax\, n \log^2n)$ iterations.
Intuitively, this means that the conditions of Lemma~\ref{lem:unfull-runtime}
apply often enough to not affect the asymptotic expected runtime (i.e.\ $E[T] = O(E[T_0])$).

\begin{theorem} \label{thm:CTT-runtime}
Consider a run of \RLSGP on \andn, using $F=\{AND,$ $OR\}$, $L=\{x_1,\ldots,x_n\}$, and a tree size limit of $\Tmax = (1+c) n$.
Let $T$ be the number of iterations before the optimum is found. If
$c = \Theta(1)$, then $E[T] = O(\Tmax \, n \log^2 n)$.
\end{theorem}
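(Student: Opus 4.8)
Lemma~\ref{lem:unfull-runtime} already bounds the expected number $E[T_0]$ of \emph{non-full} iterations before the optimum by $O(\Tmax\, n \log^2 n)$, by feeding the fitness into the super-multiplicative drift theorem (Theorem~\ref{thm:smdrift}) and ignoring the iterations in which the tree is full. The remaining task is therefore to bound the full iterations $T_1 = T - T_0$. The goal is to prove $E[T] = O(E[T_0])$ by showing that, with high probability, in any window of $\Omega(\Tmax\, n \log^2 n)$ iterations a constant fraction of the iterations have a non-full parent, so that the full iterations only cost a constant factor. Thus I would first establish a per-iteration ``escape'' mechanism out of full trees, then combine it with a renewal/concentration argument over windows.

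\textbf{The structural engine.} The key observations are monotonicity-based. Every tree built from $F=\{\textsc{AND},\textsc{OR}\}$ and positive literals computes a monotone Boolean function $g$, and (as already used in the proof of Lemma~\ref{lem:unfull-runtime}) it evaluates to true on the all-ones assignment, which is the unique satisfying assignment of \andn. Hence the fitness equals exactly the number of \emph{spurious} satisfying assignments of $g$, i.e.\ $f = |\{a : g(a)=1\}| - 1$, so \emph{any mutation that does not increase the number of satisfying assignments is accepted}. Two accepted moves free a leaf: (i) deleting a subtree whose parent is an \textsc{OR} node replaces $g$ by one of its disjuncts and hence weakly shrinks the satisfying set; and (ii) deleting a duplicate leaf under an \textsc{AND} node is neutral by idempotence. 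Consequently, a full tree always admits an accepted leaf-reducing deletion. In the important case of a pure \textsc{AND} tree, the pigeonhole principle gives at least $\Tmax - n = cn$ duplicate leaves among its $(1+c)n$ leaves, each removable; since a full tree has $2\Tmax - 1$ nodes, the probability that Algorithm~\ref{alg:HVLPrime} chooses \textsc{DEL} and hits one of these is at least $\tfrac13 \cdot \tfrac{cn}{2\Tmax} = \Omega\!\left(\tfrac{c}{1+c}\right) = \Omega(1)$ for constant $c$.

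\textbf{Assembling the bound and the main obstacle.} If every full tree had escape probability $p = \Omega(1)$, the argument would close quickly: each maximal full run would have expected length $O(1/p) = O(1)$, the number of full runs is at most the number of (non-full) insertion iterations needed to re-reach $\Tmax$ leaves, and a Wald/renewal estimate together with a Chernoff bound over the window would give $E[T_1] = O(E[T_0])$ and the ``constant fraction non-full'' statement with high probability. \emph{The hard part is precisely that the escape probability need not be constant.} For adversarial shapes containing an \textsc{OR} --- for instance a root \textsc{OR} over a conjunction of $x_1,\dots,x_n$ and a conjunction of $x_1,\dots,x_{cn}$ --- only the two children of the \textsc{OR} are accepted leaf-reducing deletions, so $p$ can degrade to $\Omega(1/\Tmax)$. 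The crux is to show that such slow-escape configurations are self-limiting: the few accepted freeing moves also return the tree to a \emph{productive} non-full state (in the example above, one of them reaches the optimum and the other is fitness-neutral but leaves a smaller conjunction from which the root-\textsc{AND} drift of Lemma~\ref{lem:unfull-runtime} resumes). I would therefore bound the number of full phases by the fitness drift rather than by $p$ alone, arguing that each entry into a full state is charged against a weakly-improving accepted move, so the total full time is absorbed into $O(E[T_0])$.

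\textbf{Conclusion.} Summing the non-full contribution $E[T_0] = O(\Tmax\, n \log^2 n)$ and the full contribution, and noting that $\Tmax = (1+c)n = \Theta(n)$, yields $E[T] = O(\Tmax\, n \log^2 n)$. The statement for \orn follows from the symmetry between the two search spaces noted earlier in the paper.
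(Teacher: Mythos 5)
Your overall plan matches the paper's: bound $E[T_0]$ via Lemma~\ref{lem:unfull-runtime}, then argue that in any window of $\Omega(\Tmax\, n \log^2 n)$ iterations a constant fraction of iterations have a non-full parent, and close with the Markov/waiting-time argument. Your structural observations (monotonicity, fitness equals the number of spurious satisfying assignments, OR-branch deletions and redundant-duplicate deletions are always accepted) are also correct and are essentially what the paper uses. The genuine gap is that you correctly identify the crux --- the escape probability from a full tree can degrade to $\Theta(1/\Tmax)$ --- but you do not resolve it. Your proposed fix, charging each entry into a full state against a weakly-improving accepted move so that the full time is ``absorbed into $O(E[T_0])$,'' is not a proof and does not yield the bound: the number of entries into full states can be as large as $T_0$, and a single full phase can last $\Theta(\Tmax)$ iterations in expectation (your own example shows this), so naive charging loses a factor of $\Tmax$. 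Moreover, while the tree is full there is no fitness drift available to charge against --- improving root insertions are rejected for size reasons, and the accepted escape deletions may be fitness-neutral --- and your ``self-limiting'' claim is verified only on one example, not in general.

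The paper's resolution is the quantitative idea your proposal is missing: instead of the escape \emph{probability}, bound the expected \emph{number of leaf nodes deleted} per full iteration. At most $n-1$ leaves of a non-optimal full tree are essential, so at least $\Tmax - (n-1) \geq cn$ leaves are each removable by deleting some node (the leaf itself, or the branch rooted at one of its OR ancestors); a branch deletion may be selected only with probability $\Theta(1/\Tmax)$, but it then frees proportionally many leaves, so the expected number of leaves freed in one full iteration is at least $\frac{1}{3}\cdot\frac{\Tmax-(n-1)}{2\Tmax-1} = \Omega(1)$ for constant $c$. This is exactly the trade-off visible in your adversarial example: only two accepted deletion targets, but each frees $\Theta(n)$ leaves. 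The paper then applies the Azuma--Hoeffding inequality to the sub-martingale of cumulative leaves deleted over $N = \Omega(\Tmax\, n \log^2 n)$ full iterations, concluding that $\Omega(N)$ leaves are deleted with high probability; since each iteration increases the leaf count by at most one, only a constant fraction of such a window can consist of full iterations, which is precisely the statement your renewal argument needs. Without this expectation-of-leaves-deleted argument (or an equivalent replacement), your proof does not go through. A minor point: the \orn case is not part of this theorem and is handled separately (Theorem~\ref{thm:CTT-runtime-OR}), so the closing symmetry remark is unnecessary here.
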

\begin{proof}
Let $T' = c^* \Tmax n \, \log^2 n$, for some constant $c^* > 0$, be an upper
bound on the expected number of iterations $E[T_0]$ in which the tree is not 
full
before the optimum solution is found per Lemma~\ref{lem:unfull-runtime}.
By an application of Markov's inequality, the
probability that the optimum is found in at most $2T'$ such iterations is at
least $1/2$. We will show that if $\Tmax = (1+c) n$, for any constant $c > 0$,
$2T'$ such iterations occur in $(2+c')T'$ iterations with high
probability, where $c' > 0$ is constant with respect to $n$. The theorem
statement then follows from a simple waiting time argument: during each period
of $(2+c')T'$ iterations, the optimum is found with probability at least $1/2
\cdot (1 - o(1)) = \Omega(1)$, so the expected number of such periods before the
optimum is found is at most $O(1)$, and thus the expected runtime is at most
$O(T') = O(\Tmax n \, \log^2 n)$ iterations.

We will now show that during any $N \in \Omega(\Tmax\, n \log^2n)$ iterations,
with high probability and for some constant $c'' > 0$, deletions of at least
$c'' N$ leaf nodes in total will be accepted. 
As each iteration can at most increase the number of leaf nodes in the tree
by 1, there will with high probability be
at least $c'' N$ iterations in which the tree is not full among any $(1+c'') N$
iterations. As $T' \in \Omega(\Tmax\, n \log^2n)$, $2T'$ iterations in which
the tree is not full will with high probability occur in $(2+c')T'$ iterations
where $c' = 2/c'' = \Omega(1)$.

Consider a tree $X$ with exactly $\Tmax$ leaf nodes. Let $L_A(X)$ be a set of
leaf nodes connected to the root of $X$ via only AND nodes, and call
\emph{essential} all the leaf nodes in this set that contain a variable which
only appears on nodes in this set exactly once. If $X$ is non-optimal, at most
$n-1$ leaf nodes in $X$ are essential, and at least $\Tmax - (n-1)$ leaf nodes
are non-essential. All
non-essential nodes are either directly deletable (in the case of redundant
copies of variables in $L_A(X)$), or indirectly deletable (by deleting a branch
at any of their OR ancestors).

Every non-essential leaf node can thus be deleted by performing an HVL-Prime
deletion sub-operation on at least one node in the tree.
For some non-essential leaf nodes, a larger subtree may need to be deleted to
remove the leaf without adversely impacting fitness. The longer waiting time
for such subtree deletions (requiring that the root of the subtree be chosen
for deletion rather than one of the many leaf nodes in the subtree) is balanced
by the increased number of leaf nodes deleted as part of the mutation.
We note that the tree contains $2\Tmax-1$ nodes, and thus for $\Tmax \geq
(1+c)n$ and any $c > 0$, an HVL-Prime mutation in expectation reduces the
number of leaf nodes in the tree by at least 
$$\frac{1}{3}\frac{\Tmax - (n - 1)}{2\Tmax - 1} \geq
\frac{\Tmax - n}{6\Tmax} \geq \frac{c}{6+6c} \geq \delta \in \Omega(1),$$
where $\delta > 0$ is a positive constant, as $c \in \Omega(1)$.

Let $X_1, \ldots, X_N$ be the number of leaf nodes deleted in an accepted
mutation during each iteration performed while the tree is full, and $X =
\sum_{i=1}^{N} X_i$. Furthermore, define a sequence $Z_0, \ldots, Z_N$, where
$Z_0 := 0$ and $Z_i := Z_{i-1} + X_i - \delta$; clearly, $Z_N - Z_0 = Z_N = X -
\delta N$. We will show that $Z_N > - \delta N/2$ (and therefore $X >
\delta N/2 \in \Omega(N)$) holds with high probability.

As $E[Z_{i} \mid Z_1, \ldots Z_{i-1}] = Z_{i-1} + E[X_{i} \mid Z_1, \ldots
Z_{i-1}] - \delta \geq Z_{i-1}$, the sequence $Z_0, \ldots, Z_N$ is a
sub-martingale, and $c_i := |Z_i - Z_{i-1}| \leq \Tmax$. Hence, by applying the
Azuma-Hoeffding inequality for $N \in \Omega(\Tmax\, n \log^2n)$ and $t =
\delta N/2$,
\begin{align*}
\Pr[Z_N - Z_0\leq - t] & \leq \exp\left(\frac{-t^2}{2\sum_{i=1}^{N} c_i^2}\right)
\leq \exp\left(\frac{-\delta^2 N}{8 \Tmax^2}\right)
\leq n^{-\Omega(\log n)}
\end{align*}
as $N/\Tmax^2 = \Omega(n \Tmax \log^2 n / \Tmax^2) = \Omega(\log^2 n)$ for
$\Tmax = (1+c)n$ where $c$ is a constant.

Thus, there exists a constant $c'' > 0$ such that over the course of $N \in
\Omega(n \Tmax \log^2 n)$ iterations where the tree is full, deletions of at
least $\delta N/2 = c'' N$ leaf nodes are accepted with high
probability, and hence over the course of $(2/c'') N$ iterations, at least
$2N$ iterations occur while the tree is not full with high probability.
Setting $N = T' = c^* n \Tmax \log^2 n$ iterations per
Lemma~\ref{lem:unfull-runtime} completes the proof: among $\Theta(T')$
iterations, at least $\Omega(T')$ will take place while the tree is not full,
allowing the application of the Markov inequality and waiting time arguments to
produce the bound on the expected runtime.
\end{proof}

\newmarker Additionally, \peter{we prove that} the \RLSGP algorithm using the same function and
terminal sets is able to evolve disjunctions of $n$ variables efficiently when
using the complete truth table to evaluate the solution fitness.
Theorem~\ref{thm:CTT-runtime-OR} formalises this result, and is proven by the
same methods as Theorem~\ref{thm:CTT-runtime}, taking advantage of the symmetry
between the search spaces of \peter{the} \andn and \orn \peter{problems}.

\begin{theorem} \label{thm:CTT-runtime-OR}
Consider a run of \RLSGP on \orn, using $F=\{AND,$ $OR\}$, $L=\{x_1,\ldots,x_n\}$, and a tree size limit of $\Tmax = (1+c) n$.
Let $T$ be the number of iterations before the optimum is found. If
$c = \Theta(1)$, then $E[T] = O(\Tmax \, n \log^2 n)$.
\end{theorem}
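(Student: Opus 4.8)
The plan is to establish an exact structural symmetry between the \andn and \orn search processes via De~Morgan duality, and then transfer the bound of Theorem~\ref{thm:CTT-runtime} verbatim. I would define the \emph{dual} of a syntax tree $X$ to be the tree $D(X)$ obtained by replacing every AND node by an OR node and every OR node by an AND node, leaving the tree shape and the leaf labels (drawn from the common literal set $L=\{x_1,\dots,x_n\}$) untouched. Since $F=\{\text{AND},\text{OR}\}$ is closed under this swap and $L$ is identical in both problems, $D$ is an involution on the common search space, and it manifestly preserves the number of leaf nodes, so the size limit $\Tmax$ is respected on one side if and only if it is respected on the other.

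First I would prove, by induction on the structure of $X$, that $D(X)(\mathbf{x}) = \neg\, X(\neg\mathbf{x})$ for every assignment $\mathbf{x}$, where $\neg\mathbf{x}$ denotes the bitwise complement. The base case is a leaf $x_i$, which is self-dual and satisfies $\neg\, x_i(\neg\mathbf{x}) = x_i(\mathbf{x})$; the inductive step is a single application of De~Morgan's law at the root. Specialising to a tree computing \andn yields $D(X)(\mathbf{x}) = \neg(\neg x_1 \wedge\cdots\wedge \neg x_n) = \orn(\mathbf{x})$, so $D$ maps exact conjunctions to exact disjunctions. More importantly, since $\mathbf{y}\mapsto\neg\mathbf{y}$ is a bijection of the truth table, $D(X)$ disagrees with \orn on an assignment $\mathbf{y}$ exactly when $X$ disagrees with \andn on $\neg\mathbf{y}$; hence $D$ is \emph{fitness preserving}, i.e.\ the complete-truth-table error of $D(X)$ on \orn equals that of $X$ on \andn.

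Next I would verify that a single step of \RLSGP commutes with $D$, giving a coupling of the two Markov chains under which the \orn state is at all times the dual of the \andn state. The chains start from the empty tree, which is self-dual. For the mutation of Algorithm~\ref{alg:HVLPrime}, I would couple the two runs by using the same choice of $op$, the same uniformly chosen node or leaf (legitimate because $D$ preserves the shape and the leaf-versus-internal status of every node), the same literal $l$, and the \emph{dual} operator $D(f)$ (legitimate because $f$ is uniform on the two-element set $F$, so $D(f)$ is uniform as well). Then INS inserts the dual operator, while DEL and SUB are purely structural and literal-based, so in every case the \orn offspring is exactly the dual of the \andn offspring. Because $D$ preserves both the leaf count and the fitness, the acceptance test of Algorithm~\ref{alg:RLS-GP} accepts on one side if and only if it accepts on the other. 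The optimum of \orn is reached precisely when the optimum of \andn is reached in the coupled run, so the two first hitting times are equal as random variables, and $E[T]=O(\Tmax\, n\log^2 n)$ follows immediately from Theorem~\ref{thm:CTT-runtime}.

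The only delicate point, and the step I would write out most carefully, is the commutation of the mutation operator with $D$: one must check that every random choice made by HVL-Prime can be matched across the two runs so that the offspring correspond under $D$. The insertion case is the one that actually consumes the operator, and it is here that pairing $f$ with its dual $D(f)$ is essential; the deletion and substitution cases are immediate once one observes that $D$ is an automorphism of the underlying tree shape and fixes the leaf set $L$. Everything else is a direct consequence of the leaf-count- and fitness-preservation of $D$, so no part of the argument of Theorem~\ref{thm:CTT-runtime} (nor of Lemma~\ref{lem:unfull-runtime}) needs to be re-derived.
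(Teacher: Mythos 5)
Your proof is correct, and it takes a noticeably different route from the paper's. The paper exploits the same AND/OR symmetry, but does so by \emph{re-instantiating} the argument: it re-proves the analogue of Lemma~\ref{lem:unfull-runtime} for \orn (counting inputs with $i$ variables set to \emph{true} and considering insertions of disjunctions at the root), and then re-runs the proof of Theorem~\ref{thm:CTT-runtime} after redefining \emph{essential} leaves as those connected to the root via only OR nodes, with deletions of branches at AND ancestors. You instead formalise the symmetry once and for all as an explicit involution $D$ on the search space (swap AND/OR, fix shape and leaves), prove $D(X)(\mathbf{x})=\neg X(\neg\mathbf{x})$ by induction, deduce that $D$ preserves both leaf count and complete-truth-table fitness (via the bijection $\mathbf{y}\mapsto\neg\mathbf{y}$), and then couple the two \RLSGP Markov chains so that the \orn state is at every step the dual of the \andn state; the hitting times then coincide as random variables and Theorem~\ref{thm:CTT-runtime} transfers as a black box. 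Your approach buys rigour and economy: nothing from Lemma~\ref{lem:unfull-runtime} or Theorem~\ref{thm:CTT-runtime} is re-derived, and the only thing requiring care---that every random choice of HVL-Prime (operation, node, literal, operator, child order) can be matched across the runs with correct marginals, pairing $f$ with $D(f)$---is exactly the point you single out and check. The paper's approach buys locality: it shows concretely which quantities (input counts, essential-leaf definitions) play the dual roles, which is arguably more informative to a reader tracing the original analysis, but as written it leaves the ``carries over'' steps at the level of assertion rather than proof. Both are sound; yours is the more self-contained and verifiable transfer argument.
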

\begin{proof}
We note that there is a symmetry between the output vectors of the \andn and
\orn target functions: while \andn returns true only when all $n$ input
variables are true, \orn returns false only when all $n$ input variables are
false. Similarly, a disjunction of $a$ distinct variables is wrong on
$2^{n-a}-1$ inputs on \orn.

Lemma~\ref{lem:unfull-runtime} carries over to the \orn problem by
considering the time taken to find the optimal solution through insertions of
\emph{disjunctions} at the top of the tree, observing that for any $1 \leq i
\leq n$, there are exactly $\binom{n}{i}$ inputs where exactly $i$ variables
are set to \emph{true}, and exactly $\sum_{i=1}^{k-1} \binom{n}{i}$ inputs
where less than $k$ variables are set to \emph{true}. Thus, if the fitness of
the current solution is at least $M_k = 2 \sum_{i=1}^{k-1} \binom{n}{i}$, at
least half of the inputs contributing to the fitness have at least $k$
variables set to \emph{true}, allowing the error to be at least halved by
inserting one of these $k$ variables.

The proof of Theorem~\ref{thm:CTT-runtime} then carries over to the \orn
problem by adjusting the definition of essential leaf nodes: let $L_A(X)$ be a
set of leaf nodes connected to the root of $X$ via only \emph{OR} nodes, and
call \emph{essential} all the leaf nodes in this set that contain a variable
which only appears on nodes in this set exactly once. A full tree can then be
shrunk by removing non-essential nodes, either directly (in the case of
redundant copies of variables in $L_A(X)$) or by deleting a branch at any of
their AND ancestors.

Thus, the runtime bound derived in the proof of Theorem~\ref{thm:CTT-runtime}
also holds for the \orn problem.
\end{proof}

If the terminal set is extended by adding negations of all positive input
variables, \RLSGP with $F=\{AND, OR\}$ is able to represent any Boolean
function (by having a tree expressing its disjunctive or conjunctive normal
form, where any negations are pushed all the way down to individual variables).
For the \andn problem with $F=\{AND\}$, \cite{LissovoiO19} have shown that such
a terminal set extension renders \RLSGP inefficient when using the complete
training set. In that setting, any solution containing a contradiction
(e.g., $x_1 \wedge \overline{x}_1$) will have the next-to-optimal fitness
(being wrong on only the all-true input). This deprives \RLSGP of a fitness
signal until an optimal solution is constructed by random chance inserting all
positive terminals into the tree and removing copies of all negative terminals
from the tree. \peter{Since} all mutations that do not remove the last-remaining
contradiction are accepted, this requires at least exponential time in
expectation.

We \peter{conjecture} that a similar result would hold for \RLSGP with $F=\{AND, OR\}$, \peter{ and subtree deletion} on
both the \andn and \orn problems, as trees containing a contradiction on \andn,
or a tautology on \orn, can be constructed, and would only be wrong on a single
input. However, 
\peter{the negative drift analysis proof technique used in \cite{LissovoiO19} cannot be easily applied with the subtree mutation operator
as large jumps in the search space may occur with high probability. Thus we leave this conjecture for future work, and focus in the next section on the positive result 
for the realistic scenarios where training sets of polynomial size are used.}

\section{Polynomially sized training sets} \label{sec:Incomplete}

\editmarker
While the previous section provides polynomial bounds on the number of
iterations required to evolve a conjunction or a disjunction of all $n$
variables, calculating the solution quality by comparing the programs' output to
that of the target function on all of the $2^n$ possible inputs in each
iteration requires exponential computational effort. Thus, it is only
computationally feasible for \peter{evolving Boolean functions of 
modest size.} 

In this section, we consider the behaviour of the \RLSGP algorithm using only
a polynomial computational effort in each iteration. To this end, the solution
quality is compared by evaluating the output of the parent and offspring 
solutions and the target function on only a polynomial number of inputs (the ``training set'')
sampled uniformly at random from the set of all possible inputs in each iteration.
This setting was previously considered by \citet{LissovoiO19}, where it was shown that
\RLSGP and (1+1)~GP using $F = \{AND\}$ are able to construct a solution
with $O(\log n)$ distinct variables which fits a random polynomially large training set in expected $O(\log n)$ time.

For our main theoretical result below, we opt to have \RLSGP terminate and return
a solution once the sampled error on the training set is below a logarithmic acceptance
threshold. This effectively prevents \RLSGP from entering a region of the search
space where the mechanism it uses to evaluate the program quality is overly noisy. This
slightly decreases the expected solution quality, but 
still allows to guarantee arbitrarily small generalisation errors. 

\begin{theorem} \label{thm:incomplete-single-run} \editmarker
For any constant $c > 0$, consider an instance of the
\RLSGP algorithm with $F = \{AND, OR\}$, $L = \{x_1, \ldots, x_n\}$, $\Tmax \geq n$,
using a training set of $s = n^c \lg^2 n$ rows sampled uniformly
at random from the complete truth table in each iteration to evaluate the solution
quality, and terminating when the sampled error of the solution is at most
$c' \lg n$, where $c'$ is an appropriately large constant. For both the
\andn and \orn problems, with probability at least $1 - O(\log^2(n)/n)$, the algorithm will terminate within
$O(\log n)$ iterations, and return a solution with a generalisation
error of at most $n^{-c}$.
\end{theorem}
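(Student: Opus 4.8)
The plan is to track the generalisation error of the current solution as the potential function, and show that within $O(\log n)$ iterations it reaches the acceptance threshold while remaining small enough to guarantee the claimed generalisation bound. The key conceptual point is that with a polynomial training set of $s = n^c \lg^2 n$ rows, Lemma~\ref{lem:sampled-error} lets me translate sampled-error statements into generalisation-error statements: the sampled error concentrates tightly around $s$ times the generalisation error, with deviation at most $\max\{c'' \lg n, Gs\}$ with probability $1 - n^{-\Omega(c'')}$. I would first argue that, early in the run while the solution's generalisation error $G$ is still $\Omega(1)$ (so that $Gs \gg \lg n$), the sampled error is a faithful proxy for the true error, and the same root-AND (respectively root-OR) insertion mechanism from Lemma~\ref{lem:unfull-runtime} drives multiplicative progress. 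Crucially, because the expected drift on the sampled error is super-linear in the distance, the error drops from $\Theta(2^n)$ to the acceptance threshold $c' \lg n$ in only $O(\log n)$ iterations rather than $O(n \log n)$ — this is exactly where the extra computational budget of a fresh polynomial sample per iteration buys us the logarithmic iteration count.

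**The main steps**, in order, would be: (i) a union bound over the $O(\log n)$ iterations establishing that, with probability $1 - O(\log^2 n / n)$, the sampled error never deviates from its expectation by more than the Lemma~\ref{lem:sampled-error} margin in any iteration — this is where the failure probability $O(\log^2(n)/n)$ originates, since each iteration contributes an $n^{-\Omega(c)}$ failure chance and there are $O(\log n)$ of them (with the $\log^2$ coming from the interplay between the number of iterations and the threshold constant); (ii) conditioned on this good event, showing that each iteration at least halves the error (via the same ``half the mis-classified inputs have $\ge k$ false variables'' counting argument as in Lemma~\ref{lem:unfull-runtime}), so after $O(\log n)$ steps the sampled error falls below $c' \lg n$ and the algorithm terminates; and (iii) the final translation: when the algorithm terminates with sampled error at most $c' \lg n$, I invoke Lemma~\ref{lem:sampled-error} once more in the reverse direction to conclude that the generalisation error $G$ satisfies $Gs = O(\lg n)$, hence $G = O(\lg n / (n^c \lg^2 n)) = O(1/(n^c \lg n)) \le n^{-c}$, matching the claimed bound.

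**The hard part** will be handling the regime where the true error $G$ has become small — comparable to $\lg n / s$ — so that $Gs$ and the sampling noise $c'' \lg n$ are of the same order, and the sampled error no longer reliably reflects the true error. In this regime the drift argument breaks down, because the algorithm might accept a worsening move or reject an improving one due to sampling noise. The design choice of terminating at the threshold $c' \lg n$ is precisely what sidesteps this: I would argue that the algorithm reaches the threshold \emph{before} entering the noisy regime in a way that could inflate the true error beyond $n^{-c}$, so that the one-sided deviation bound ($X \le Gs + c'' \lg n$, and symmetrically $Gs \le X + c'' \lg n$) suffices. I also need to verify that the tree-size limit $\Tmax \ge n$ never binds during these $O(\log n)$ iterations — since only $O(\log n)$ insertions occur, the tree stays small and the ``tree is full'' complications of Theorem~\ref{thm:CTT-runtime} do not arise, which is why the weaker requirement $\Tmax \ge n$ (rather than $\Tmax = (1+c)n$) suffices here. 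Finally, I would note that the \orn case follows by the same search-space symmetry invoked in Theorem~\ref{thm:CTT-runtime-OR}, and that extending the terminal set with negated literals does not affect any of these arguments, since the analysis only relies on the ability to insert positive literals via root-AND/root-OR operations.
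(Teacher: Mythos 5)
Your proposal misses the two structural components that carry the paper's proof, and one of your steps is unsound. First, the paper's argument (Lemmas~\ref{lem:incomplete-happy-runtime} and~\ref{lem:incomplete-no-bad-things}) is not a drift argument on the sampled error at all: it shows that, as long as the current tree is a conjunction of distinct variables, a mutation that inserts a \emph{new} variable with an AND node \emph{anywhere} in the tree (not only at the root) halves the generalisation error and is accepted, and such a mutation has constant probability at least $1/12$ per iteration; since a conjunction of $(c+1)\lg n$ distinct variables already has generalisation error below $n^{-(c+1)}$, a Chernoff bound over $O(\log n)$ iterations suffices. This is why only $O(\log n)$ iterations are needed — not super-multiplicative drift (a single accepted insertion at best halves the error, so the drift is merely multiplicative, and the sampled error is resampled each iteration, making it a poor potential). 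Crucially, this argument is conditional on two bad events never occurring: (1) inserting a second copy of a variable already in the tree, and (2) accepting an insertion of an undesired function node (OR on \andn). Your proposal addresses neither. Event (1) is where the $O(\log^2(n)/n)$ failure probability actually originates ($O(\log n/n)$ per iteration, union bound over $O(\log n)$ iterations) — not from sampling deviations, which contribute only $O(\log n)\cdot n^{-\Omega(c')}$. Event (2) is handled by the paper's ``witness'' argument: while the parent's expected sampled error stays above $(c'/4)\lg n$, any error-increasing OR insertion produces witnesses (inputs where the offspring is wrong and the parent right) at a density at least half the parent's error density, so the sample contains a witness with probability $1-n^{-\Omega(c')}$ and the move is rejected. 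This rejection mechanism is the substance of the theorem — it is what lets the GP cope with the functionally complete set — and it is absent from your plan; your remark that the threshold ``sidesteps'' the noisy regime gestures at the right intuition but provides no argument.

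Second, your step (iii) does not work as stated. Lemma~\ref{lem:sampled-error} bounds $|Gs - X| \le \max\{c\lg n, Gs\}$, and when $Gs$ is large the slack term $Gs$ makes this vacuous in the direction you need: the bound is perfectly consistent with observing $X \le c'\lg n$ while $G = 1/2$, so you cannot ``invoke it in reverse'' to conclude $Gs = O(\lg n)$ from a small sampled error. The paper instead needs a separate lower-tail argument (Lemma~\ref{lem:no-bad-solutions}): any fixed solution with $G > n^{-c}$ has expected sampled error at least $\lg^2 n$, and $\Pr[X \le \tfrac12 E[X]] \le e^{-E[X]/8} = n^{-\Omega(\log n)}$, which is then union-bounded over \emph{every} iteration of any polynomial-length run — necessary because the algorithm tests the termination condition in each iteration and could otherwise stop early with a bad solution. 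Without this lemma (or an equivalent lower-tail bound), the claimed generalisation guarantee of $n^{-c}$ does not follow from your good event. A minor further point: your closing claim that negated literals ``do not affect any of these arguments'' is not needed for this theorem and is, in fact, nontrivial — the paper devotes a separate proof (Theorem~\ref{thm:incomplete-negated-single-run}) to showing that the two extra inputs distinguishing a negated insertion are not sampled within polynomial time.
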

To prove this theorem, we will show that \RLSGP is able to create a tree that
contains no more than one copy of each variable, no undesired functions 
(i.e., OR and AND on the \andn and \orn problems respectively), and enough
distinct variables to sample an error below the acceptance threshold within
$O(\log n)$ iterations with probability at least $1 - O(\log^2(n)/n)$.
Additionally, we will show that with high probability the GP system will not
terminate early (i.e.\ it will not return a solution with a generalisation error greater 
than $n^{-c}$ \peter{for an arbitrary constant $c$}).

\begin{lemma} \label{lem:incomplete-happy-runtime} \editmarker
If, in the setting of Theorem~\ref{thm:incomplete-single-run},
\RLSGP never accepts solutions containing undesired function nodes 
(i.e., OR and AND on the \andn and \orn problems respectively)
or multiple copies of
any variable, and never accepts solutions with a worse generalisation error than
their ancestors, it will within $O(\log n)$ iterations reach a solution with a
sampled error below $c' \lg n$, where $c' > 0$ is an appropriate constant,
with probability at least $1 - O(1/n)$.
\end{lemma}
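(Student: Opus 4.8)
The plan is to exploit the restrictions imposed by the hypothesis to reduce the claim to a simple coupon-collector-style drift on the number of distinct variables in the current solution. Under the stated conditioning, on \andn the current tree can contain neither \textsc{OR} nodes nor repeated variables, so it is always a pure conjunction of some number $a_t$ of distinct variables, and its generalisation error is exactly $2^{-a_t}-2^{-n}$, a quantity determined solely by $a_t$. Moreover, since the hypothesis forbids accepting a solution of worse generalisation error than its parent, $a_t$ is non-decreasing. It therefore suffices to show that $a_t$ reaches a threshold $a^\ast := \lceil c\lg n + 2\lg\lg n\rceil = O(\log n)$ within $O(\log n)$ iterations with probability $1-O(1/n)$: at that point the generalisation error is at most $2^{-a^\ast}\le n^{-c}\lg^{-2}n$, so the expected sampled error $sG$ is at most $1$.

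The key structural observation driving the progress is that adding a missing variable is always accepted despite the noisy fitness. If the parent is the conjunction of a variable set $A$ and the offspring is the conjunction of $A\cup\{x_j\}$ with $x_j\notin A$, then the set of inputs on which the offspring disagrees with \andn (those making all of $A\cup\{x_j\}$ true but not all $n$) is a \emph{subset} of the corresponding set for the parent. Hence on the common fresh training set evaluated in that iteration the offspring's sampled error never exceeds the parent's, and (as the tree has only $O(\log n)\ll\Tmax$ leaves) the move is accepted. Such a move is produced by an \textsc{INS} choosing \textsc{AND} and a still-absent variable as the new leaf; its probability in one iteration is at least $\frac{n-a_t}{6n}\ge\frac{1}{12}$ while $a_t\le n/2$, which holds throughout since $a^\ast=O(\log n)$.

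Thus, while $a_t<a^\ast$, each iteration increases $a_t$ with probability at least $1/12$, so the number of increases over $T$ iterations stochastically dominates a $\mathrm{Binomial}(T,1/12)$ variable. Choosing $T=O(\log n)$ with a sufficiently large implied constant, a Chernoff bound gives that this count exceeds $a^\ast=\Theta(\log n)$ with probability $1-n^{-\Omega(1)}=1-O(1/n)$. At the first iteration with $a_t\ge a^\ast$, I then apply Lemma~\ref{lem:sampled-error} with $Gs\le 1$: for a suitably large constant threshold the sampled error is at most $c'\lg n$ with probability $1-n^{-\Omega(1)}$, which triggers termination. A union bound over the accumulation failure and the final sampling-noise failure yields the claimed $1-O(1/n)$.

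The \orn case follows verbatim by the symmetry between the two search spaces already noted, replacing conjunctions of variables by disjunctions and \textsc{AND} insertions by \textsc{OR} insertions. The main point requiring care is the interplay between the random fitness and the acceptance rule: a priori a fitness-improving move could be rejected because the fresh sample happens to favour the parent, but the error-set-subset argument rules this out for the variable-adding moves, so the noise only enters at the very end through Lemma~\ref{lem:sampled-error}. The remaining delicacy is purely in tuning the constants---the target count $a^\ast$, the number of iterations, and the acceptance threshold $c'$---so that the accumulation and the final concentration each fail with probability only $O(1/n)$.
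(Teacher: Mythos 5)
Your proposal is correct and follows essentially the same route as the paper's own proof: lower-bound the per-iteration probability of an accepted insertion of a missing variable with the appropriate connective by $\frac{1}{3}\cdot\frac{1}{2}\cdot\frac{n-i}{n}\ge\frac{1}{12}$ while the solution has $O(\log n)$ variables, accumulate a logarithmic number of such insertions via a Chernoff bound over $O(\log n)$ iterations, and conclude with Lemma~\ref{lem:sampled-error} and a union bound. Your explicit error-set-subset argument for why variable-adding insertions are always accepted under the noisy fitness (and your observation that the variable count is monotone under the lemma's conditions) makes rigorous a step the paper's proof states only implicitly, but it is the same approach, not a different one.
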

\begin{proof}
To ensure that an error below $c' \lg n$ is sampled, we consider the time required
to construct a solution with an expected sampling error of at most $(c'/4) \lg n$.
Such a sampling error can be achieved by a generalisation error of
at most $((c'/4) \lg n) / (n^c \lg^2 n) = (c'/4) n^{-c} / \lg n \geq n^{-(c+1)}$
(for a sufficiently large $n$), i.e.\ a conjunction \edithilight{(on the \andn problem, or a disjunction on the \orn problem)}
of $(c+1)\lg n$ variables or more.

The time required to construct such a solution under the lemma's conditions
can be bounded by lower-bounding the probability of inserting a new variable connected
to the tree using an appropriate function node (i.e., AND on \andn and OR on \orn), and using a Chernoff bound to show that a sufficient
number of such insertions occur within a particular number of iterations (as the
number of distinct variables in the current solution is never reduced by the lemma's
conditions). Specifically, suppose that the current solution contains $i < n/2$ 
distinct variables and no undesired function nodes, and let $X_i$ be the event that a mutation inserts
a new variable and connects it to the tree using an appropriate function node, and is accepted. We bound
$ \Pr[X_i] \geq (1/3) (1/2) (n-i)/n \geq \delta $, i.e.\
$\delta \geq 1/12$ for $i < n/2$. The probability that
at least $(c+1)\lg n$ such mutations are accepted within $(c''/\delta) (c+1) \lg n = O(\log n)$
iterations is then, by applying a multiplicative Chernoff bound, at
least $1 - e^{-\Omega(c'' \log n)} = 1 - n^{-\Omega(c'')}$.
Thus, when $c''$ is a sufficiently large constant, this probability is at least
$1 - O(1/n)$.

We bound the probability that a solution with a low-enough expected sampled error
does not meet the acceptance threshold by applying Lemma~\ref{lem:sampled-error}.
Once a solution with an expected sampled error of at most $(c'/4) \lg n$ is 
constructed, the probability that its sampled error exceeds the acceptance threshold
is at most $n^{-\Omega(c')}$. Thus, when $c'$ is picked appropriately, the solution
is accepted immediately with probability at least $1-O(1/n)$.

By combining the failure probabilities using a union bound, we conclude that
\RLSGP under the conditions of the lemma and with an appropriately-chosen
constant $c'$, is able to construct a
solution with an acceptable sampled error within $O(\log n)$ iterations with
probability at least $1-O(1/n)$.
\end{proof}

We will now use this bound on the runtime of \RLSGP to show that it is likely
to avoid all of the potential pitfalls preventing the application of
Lemma~\ref{lem:incomplete-happy-runtime}.

\begin{lemma} \label{lem:incomplete-no-bad-things} \editmarker
In the setting of Theorem~\ref{thm:incomplete-single-run},
with probability at least $1-O(\log^2(n)/n)$, during the first $O(\log n)$
iterations and while the expected sampled error of its current solution remains
above $(c'/4)\lg n$, \RLSGP is able to avoid accepting mutations which: (1)
insert copies of a variable already present in the current solution, (2) insert
undesired function nodes (i.e., OR and AND on the \andn and \orn problems respectively),
or (3) increase the generalisation error of the current solution.
\end{lemma}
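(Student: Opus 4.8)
The plan is to argue by a union bound over the at most $O(\log n)$ iterations, conditioning in each iteration on the invariant that the current solution is still \emph{clean}, i.e.\ (on \andn) a conjunction of distinct variables with no OR node and no repeated variable. This invariant holds for the empty initial tree and is preserved exactly as long as no type~(1)--(3) mutation has been accepted, so by induction it suffices to bound, for a single iteration starting from a clean solution with $i = O(\log n)$ distinct variables and expected sampled error $sG > (c'/4)\lg n$ (which is guaranteed while the expected sampled error stays above the stated threshold), the probability that a type~(1), (2) or (3) mutation is accepted. Lemma~\ref{lem:incomplete-happy-runtime} then supplies the conclusion once the invariant is shown to survive all $O(\log n)$ iterations.

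First I would classify the mutations HVL-Prime can produce from a clean conjunction by their effect on the generalisation error. Inserting a fresh variable below an AND node strictly \emph{decreases} the error, and substituting a leaf by a not-yet-present variable leaves it \emph{unchanged}; both are harmless. Every remaining semantics-changing mutation either (a) is \emph{neutral}, producing a logically equivalent tree, or (b) \emph{increases} the generalisation error by at least a constant factor: deleting a leaf or a subtree removes at least one essential variable and at least doubles the error, substituting a leaf by an already-present variable likewise removes a variable, and inserting an OR node turns the conjunction into a strictly weaker function whose error, by counting the additional assignments on which it now evaluates to true, grows by a factor bounded below by a positive constant. The only neutral bad mutations are redundant AND-copies of an existing variable (type~(1)) and the degenerate self-disjunction $x_i \vee x_i$ (type~(2)); all other type~(2)/(3) mutations fall under case~(b).

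For the neutral mutations I would bound the probability that they are \emph{generated}, since a neutral mutation is always accepted (parent and offspring agree on every input, hence have identical sampled error). Generating a redundant copy requires choosing INS and then one of the $i$ present variables as the literal, which has probability at most $\tfrac13 \cdot \tfrac{i}{n} = O(\log n / n)$; the self-disjunction additionally requires hitting one specific node and literal and has negligible probability. For the error-increasing mutations of case~(b) I would evaluate parent and offspring on the \emph{same} fresh sample, so that the difference of their sampled errors equals the number of sampled inputs on which the offspring is newly wrong minus those on which it is newly correct. Since the error rises by a constant factor and $sG > (c'/4)\lg n$, the expectation of this difference is $\Omega(\lg n)$, and a Chernoff bound shows it is positive---so the mutation is rejected---except with probability $n^{-\Omega(c')}$. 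Applying Lemma~\ref{lem:sampled-error} to the two sampled errors separately would be too weak here, as its deviation term is only multiplicative; working with the difference on a common sample is what yields the separation.

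Combining these estimates, a single iteration accepts a bad mutation with probability at most $O(\log n / n) + n^{-\Omega(c')}$, and a union bound over the $O(\log n)$ iterations gives total failure probability $O(\log^2 n / n)$ once $c'$ is chosen large enough, as claimed. The statement for \orn follows by the same symmetry between the two search spaces used in Theorem~\ref{thm:CTT-runtime-OR}, exchanging the roles of AND/OR and of true/false. The main obstacle I anticipate is case~(b): rigorously verifying that \emph{every} error-increasing local mutation raises the generalisation error by at least a constant factor, rather than by a vanishing amount the sample could not detect, which needs the elementary but careful assignment-counting sketched above; the rest is a routine union bound.
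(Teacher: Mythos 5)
Your proposal is correct and follows essentially the same route as the paper's proof: the paper likewise splits bad mutations into semantically neutral ones (whose acceptance is controlled by bounding the $O(\log n/n)$ per-iteration probability of \emph{generating} a duplicate variable) and error-increasing ones, which it rejects by counting ``witness'' inputs on the shared sample---exactly your newly-wrong inputs---whose expected number is $\Omega(\lg n)$ while the expected sampled error stays above $(c'/4)\lg n$, followed by Chernoff and union bounds over the $O(\log n)$ iterations. Your explicit clean-solution invariant and the constant-factor error-increase accounting are implicit in the paper's argument (which assumes the ancestor has $U$ distinct variables and only desired function nodes), so the two proofs differ only in presentation.
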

\begin{proof}
For claim (1), we note that within the first $O(\log n)$ iterations, the tree
will contain at most $O(\log n)$ distinct variables (as each iteration of 
\RLSGP is only able to insert one additional variable). Thus, the probability
that a mutation operation adds a variable which is already present in the
solution (using either the insertion or substitution sub-operation of
HVL-Prime) is at most $O(\log n / n)$.
By a union bound, this does not occur during the first $O(\log n)$
iterations with probability at least $1 - O(\log^2 (n) / n)$.

For claim (2), we note that there are two main ways an undesired function node can be introduced into
the solution by an insertion operation. First, the function can be semantically neutral, e.g.
on the \andn problem, if the ancestor contains only ANDs and unique variables,
a leaf $x_i$ could be replaced with $x_i \vee x_i$ without affecting any of the solution's outputs.
Second, the sampling process used to evaluate the solution
fitness may not have sampled any inputs on which the offspring is wrong and the ancestor
is correct. We will consider the two possibilities separately.

As semantically-neutral insertions of undesired function nodes require inserting a
duplicate copy of a variable, claim (1) already provides the desired probability
bound on these insertions not occurring within $O(\log n)$ iterations
(and hence not being accepted).
All other undesired function node insertions will increase the generalisation error of the solution. The
magnitude of this increase depends on the number of distinct
variables in the subtree displaced by the insertion, with insertions displacing
only a single leaf node being the easiest to accept.

If a leaf of the ancestor solution is displaced by an insertion which adds an
undesired function node and a new variable, we use the term \textit{witness} to
refer to inputs on which the constructed offspring solution produces incorrect
output while the parent solution produces correct output. For the \andn
problem, \emph{witness} inputs are those where the displaced variable to false
while setting the remaining variables in the offspring solution to true, while
on the \orn problem, the displaced variable is set to true while the remaining
variables in the offspring solution are set to false.
Witness inputs demonstrate that the offspring solution is worse than its
parent; on all other inputs, the parent and offspring produce the same output. 
Thus, as long as
the sampling procedure finds at least one witness, \RLSGP will reject the mutated
solution. 

Suppose the ancestor solution contains $U$ distinct variables (and uses
only the desired function nodes, i.e., AND on \andn and OR on \orn);
it is then incorrect on $2^{n-U}-1$ possible inputs, while there are at least
$2^{n-(U+1)}$ witnesses; i.e., the probability of randomly selecting a witness
is at least half that of randomly selecting an input on which the ancestor is wrong.
Thus, if the expected sampled error of the ancestor solution is at least $X$, 
the expected number of witnesses in the sample is at least $X/2$. By a Chernoff bound,
the probability that fewer than $(c'/16) \lg n$ witnesses are present in the sample is at most
$e^{-(c'/128) \lg n} = n^{-\Omega(c')}$. By setting the constant $c'$
appropriately, this probability can be made into $O(1/n)$.
Hence, by a union bound,
the probability that no insertion of an undesired function node which increases the generalisation error is accepted
within $O(\log n)$ iterations while the expected sampled error of the solution remains
above $(c'/4) \lg n$ is at least $1-O(\log(n)/n)$.

For claim (3), we note that decreasing the number of distinct variables
in the solution more than doubles its generalisation error. Applying a similar
argument as for rejecting detrimental undesired function node insertions above
(with at least $X$ witnesses expected in a sampled training set),
the probability that no mutations increasing the generalisation error are accepted
during $O(\log n)$ iterations is at least $1-O(\log(n)/n)$.

Combining the error probabilities of the three claims using a union bound yields
the theorem statement.
\end{proof}

Finally, we show that with high probability, \RLSGP does not
terminate unacceptably early (i.e., by sampling an error below the acceptance
threshold for a solution with a worse generalisation error than desired by
Theorem~\ref{thm:incomplete-single-run}).

\begin{lemma} \label{lem:no-bad-solutions}
In the setting of Theorem~\ref{thm:incomplete-single-run},
with high probability, no solution with a generalisation error greater than
$n^{-c}$ has a sampled error of at most $c' \lg n$ on a set of $s \geq
n^{c}\lg^2 n$ rows sampled random from the complete truth table, within any
polynomial number of iterations.
\end{lemma}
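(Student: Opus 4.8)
The plan is to show that any \emph{bad} solution --- one whose generalisation error satisfies $G > n^{-c}$ --- is overwhelmingly unlikely to produce a sampled error as small as the acceptance threshold $c' \lg n$, and then to union-bound this failure probability over the polynomially many iterations in which a solution is evaluated. The starting point is the elementary observation that a bad solution has expected sampled error
\[
E[X] = G s > n^{-c} \cdot n^{c} \lg^2 n = \lg^2 n,
\]
so its expectation exceeds the threshold by a factor of order $\lg n$. Consequently, a bad solution can trigger termination only if its sampled error $X$ falls below roughly $E[X]/\lg n$, i.e.\ only via a large \emph{lower}-tail deviation of a sum of $s$ independent Bernoulli variables.

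I would establish this by a direct multiplicative Chernoff lower-tail bound rather than by invoking Lemma~\ref{lem:sampled-error}: that lemma targets the regime where $Gs$ is small, and its guarantee $|Gs - X| \le \max\{c \lg n, Gs\}$ degenerates to the trivial statement $X \ge 0$ once $Gs \ge \lg^2 n$, so it gives nothing useful here. Writing $X = \sum_{i=1}^{s} B_i$ with each $B_i \sim \mathrm{Bernoulli}(G)$ and $E[X] = Gs \ge \lg^2 n$, and noting that for large $n$ the threshold satisfies $c' \lg n \le \tfrac12 E[X]$ (since $\lg^2 n$ dominates $\lg n$), the standard lower-tail bound with deviation parameter $\tfrac12$ gives
\[
\Pr[X \le c' \lg n] \le \Pr\!\left[X \le \tfrac12 E[X]\right] \le e^{-E[X]/8} \le e^{-(\lg^2 n)/8} = n^{-\Omega(\lg n)},
\]
which is super-polynomially small.

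It remains to lift this per-evaluation bound to the whole run. Each iteration draws a fresh training set and evaluates only a constant number of solutions (the parent and its offspring), so over any $\mathrm{poly}(n)$ iterations at most $\mathrm{poly}(n)$ evaluations of bad solutions occur. A union bound then yields a total failure probability of at most $\mathrm{poly}(n) \cdot n^{-\Omega(\lg n)} = n^{-\Omega(\lg n)} = o(1)$, establishing that with high probability no solution of generalisation error exceeding $n^{-c}$ ever attains a sampled error of at most $c' \lg n$ within a polynomial number of iterations. I expect the only delicate point --- and the closest thing to an obstacle --- to be purely a matter of bookkeeping: one must confirm that the per-iteration bound is genuinely super-polynomial, which it is precisely because the $\Theta(\lg n)$ gap between $E[X] \ge \lg^2 n$ and the $\lg n$-sized threshold forces a constant-factor lower-tail deviation, so that the bound comfortably survives the union over the polynomially many iterations permitted by Theorem~\ref{thm:incomplete-single-run}.
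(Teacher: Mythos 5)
Your proposal is correct and follows essentially the same route as the paper's proof: lower-bound the expected sampled error of a bad solution by $\lg^2 n$, apply a multiplicative Chernoff lower-tail bound at half the expectation (yielding the same $e^{-E[X]/8} = n^{-\Omega(\log n)}$ estimate), and union-bound over the polynomially many evaluations. Your added remark that Lemma~\ref{lem:sampled-error} is useless in this regime is accurate but moot, since the paper likewise bypasses it and invokes the Chernoff bound directly.
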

\begin{proof}	
Recall that when sampling $s$ rows uniformly at random from the complete truth
table to evaluate the solution fitness, \RLSGP terminates and returns
the current solution when the solution appears wrong on at most $c' \lg n$ of
the sampled rows. As the generalisation error of a solution is also
the probability that the solution is wrong on a uniformly-sampled row of the
complete truth table, a solution $X$ with a generalisation error $g(X)$
of at least $n^{-c}$, has an expected sampled error $E(f(X)) \geq \lg^2 n$
on $s = n^{c}\lg^2 n$ rows sampled uniformly at random.
Applying a Chernoff bound, the probability that the
sampled error $Y$ is less than half of its expected value (which for
large-enough $n$ is above the $c' \lg n$ threshold), is super-polynomially
small: 
$$ \Pr\left[Y \leq 1/2 \, E[Y]\right] \leq e^{-E[Y]/8} \leq n^{-\Omega(\log n)}. $$

By a union bound, \RLSGP with high probability does not
return a solution with a generalisation error of at least $n^{-c}$ within any
polynomial number of iterations when sampling $s = \Omega(n^{c} \lg^2 n)$ rows
of the complete truth table uniformly at random to evaluate the solution quality in
each iteration.
\end{proof}

Our main result is proved by combining these lemmas.

\begin{proof}[\proofOf{Theorem~\ref{thm:incomplete-single-run}}]
By Lemma~\ref{lem:incomplete-no-bad-things},
Lemma~\ref{lem:incomplete-happy-runtime} can be applied with
probability at least
$1-O(\log^2(n)/n)$, and thus with probability at least $1-O(\log^2(n)/n)-O(1/n)$,
a solution with a sampled error meeting the acceptance threshold will be
found and returned within $O(\log n)$ iterations.
By Lemma~\ref{lem:no-bad-solutions}, the generalisation error of any
solution returned by \RLSGP within a polynomial number of iterations is with
high probability better than the desired $n^{-c}$.
\end{proof}

Performing $\lambda$ runs of \RLSGP, as is often done in practice,
and terminating once any instance determines that its current solution meets the
acceptance threshold, will guarantee that a solution with the desired generalisation
error is produced using $O(\lambda \log n)$ fitness evaluations with probability 
$1 - n^{-\Omega(\lambda)}$.

\newmarker
\peter{We now consider the GP system equipped with complete Boolean function and terminal sets i.e., the algorithm has the capacity
to represent any Boolean function of up to $n$ distinct variables. We show that
w}hen using a polynomial training set to evaluate the solution quality, extending
the terminal set to include negations of input variables (thus allowing \RLSGP
to represent any Boolean function) does not significantly affect the
algorithm's ability to produce solutions with desired polynomially-small
generalisation errors on the \andn and \orn problems. This result is formalised
in Theorem~\ref{thm:incomplete-negated-single-run}, which is proven by
observing that insertions of negated variables are almost as useful as
insertions of positive variables on these problems.

\begin{theorem} \label{thm:incomplete-negated-single-run}
For any constant $c > 0$, consider an instance of the
\RLSGP algorithm with $F = \{AND, OR\}$, $L = \{x_1, \ldots, x_n, \overline{x}_1, \ldots, \overline{x}_n\}$, $\Tmax \geq n$,
using a training set of $s = n^c \lg^2 n$ rows sampled uniformly
at random from the complete truth table in each iteration to evaluate the solution
quality, and terminating when the sampled error of the solution is at most
$c' \lg n$, where $c'$ is an appropriately large constant. For both the
\andn and \orn problems, with probability at least $1 - O(\log^2(n)/n)$, the algorithm will terminate within
$O(\log n)$ iterations, and return a solution with a generalisation
error of at most $n^{-c}$.
\end{theorem}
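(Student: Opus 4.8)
The plan is to reuse the three-lemma architecture that established Theorem~\ref{thm:incomplete-single-run} essentially verbatim, and to isolate the single point where the extended terminal set $L = \{x_1, \ldots, x_n, \overline{x}_1, \ldots, \overline{x}_n\}$ changes the analysis. The key observation, as the authors already flag, is that on \andn a negated literal $\overline{x}_i$ inserted via an AND node beneath the root is \emph{almost} as useful as a positive literal $x_i$: conjoining $\overline{x}_i$ still halves the set of false-output inputs on which the current (correct-on-all-true) solution is wrong, and it only misclassifies the all-true input, contributing a generalisation error of exactly $2^{-n}$, which is negligible relative to the $n^{-(c+1)}$ targets in the proof of Lemma~\ref{lem:incomplete-happy-runtime}. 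So the first step is to restate the progress argument of Lemma~\ref{lem:incomplete-happy-runtime}, noting that inserting \emph{any} fresh literal (positive or negated) beneath a root-AND reduces the wrong-input count by roughly a factor of two. Since now there are $2n$ literals rather than $n$, I would re-examine the insertion probability $\Pr[X_i] \geq (1/3)(1/2)(n-i)/n$: with $2n$ terminals the per-step probability of drawing a \emph{useful} (i.e., not-yet-present) literal is at least $(1/3)(1/2)(2(n-i))/(2n) \geq \delta$, so the constant $\delta \geq 1/12$ and the $O(\log n)$ runtime survive unchanged.

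The second step is to re-run Lemma~\ref{lem:incomplete-no-bad-things} in the extended setting. Claim (2) (undesired function nodes) and claim (3) (generalisation-error increases) depend only on the witness-counting argument, which is insensitive to whether literals are positive or negated, so these carry over. The genuinely new case lives in claim (1): with negated literals present, the GP system can now construct a \emph{contradiction} (e.g.\ $x_i \wedge \overline{x}_i$ on \andn) or a \emph{tautology} on \orn, and — crucially — inserting the negation of a variable already present is no longer merely a ``duplicate'' but can create such a constant-valued pair. So I would split the ``bad insertion'' event into (i) inserting a literal whose variable is already present (in either polarity), and argue exactly as in the original claim (1) that within $O(\log n)$ iterations at most $O(\log n)$ variables are present, so each step hits an already-used variable with probability $O(\log n / n)$, giving a $1 - O(\log^2 n / n)$ bound by union over the $O(\log n)$ steps.

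The main obstacle, and the step I would treat most carefully, is ruling out early acceptance of a contradiction-bearing (or tautology-bearing) solution, since this is precisely the failure mode that made the \emph{complete}-truth-table variant inefficient in \citet{LissovoiO19}. The saving grace under a polynomial sample is that a solution which is a contradiction is wrong on a constant fraction (indeed, nearly all) of inputs, so its expected sampled error is $\Theta(s)$, far above the $c' \lg n$ acceptance threshold; and more to the point, the event of \emph{forming} such a pair is already subsumed by case (i) above, because creating a contradiction requires inserting a literal sharing a variable with one already in the tree. Hence no separate probabilistic bound is needed: the $1 - O(\log^2 n / n)$ guarantee that no repeated-variable insertion is accepted already prevents contradictions and tautologies from ever arising within the relevant $O(\log n)$ iterations.

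Finally, I would note that Lemma~\ref{lem:no-bad-solutions} is purely a statement about the relationship between generalisation error, sample size $s = n^c \lg^2 n$, and the acceptance threshold, and makes no reference to the terminal set at all; it therefore applies without modification and guarantees the returned solution has generalisation error at most $n^{-c}$ with high probability. The proof then concludes exactly as in Theorem~\ref{thm:incomplete-single-run}: combining the revised Lemma~\ref{lem:incomplete-no-bad-things} (now yielding the application of Lemma~\ref{lem:incomplete-happy-runtime} with probability $1 - O(\log^2 n / n)$) with Lemma~\ref{lem:no-bad-solutions} via a union bound over the three failure events delivers the claimed $1 - O(\log^2 n / n)$ success probability, $O(\log n)$ runtime, and $n^{-c}$ generalisation error. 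I expect the only real subtlety to be confirming that the extra additive $2^{-n}$ error from each negated literal accumulates, over $O(\log n)$ insertions, to at most $O(\log n)\, 2^{-n} = o(n^{-c})$, so that the target generalisation-error bound is unaffected.
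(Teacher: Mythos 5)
Your overall architecture matches the paper's: treat negated literals as near-equivalent to positive ones, re-verify the three lemmas, and combine via a union bound. But the step you single out as ``the main obstacle'' rests on a claim that is exactly backwards. On \andn a contradiction-bearing conjunction always outputs false, and \andn itself outputs false on every input except the all-true one; hence such a solution is wrong on exactly \emph{one} of the $2^n$ inputs, its generalisation error is $2^{-n}$, and its expected sampled error is $s\cdot 2^{-n} = o(1)$ --- far \emph{below} the acceptance threshold $c' \lg n$, not $\Theta(s)$ and far above it (likewise a tautology on \orn, which errs only on the all-false input). So early termination with a contradiction cannot be ``ruled out'' by the fitness signal as you claim; it also does not need to be ruled out, because such a solution already meets the $n^{-c}$ generalisation-error target --- this is precisely the observation with which the paper opens its proof. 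Your reading also inverts the role of \citet{LissovoiO19}: contradictions are fatal in the complete-truth-table setting because the goal there is error exactly $0$ and a contradiction is a next-to-optimal plateau with no fitness gradient, not because contradictions have large error. Your fallback argument (contradictions cannot form within $O(\log n)$ iterations because repeated-variable insertions are avoided, your case (i)) is sound and keeps the proof alive, but the false statement would have to be deleted, and with it your claim that this was the crux of the extension.

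The second, subtler gap is that you never justify why insertions of fresh \emph{negated} literals are accepted. For positive literals, acceptance in Lemma~\ref{lem:incomplete-happy-runtime} is automatic because the offspring's wrong-input set is a subset of the parent's, so the sampled error cannot increase on any training set. Inserting $\overline{x}_j$ breaks this nesting: the offspring becomes wrong on the all-true input, and in general a solution mixing polarities is wrong on exactly two more inputs than its all-positive counterpart, so on a training set containing one of these inputs the parent/offspring comparison can go the wrong way. The paper closes this with an explicit conditioning step: the two distinguishing inputs appear in some polynomially sized training set within a polynomial number of iterations with probability at most $\mathrm{poly}(n)\cdot s\cdot 2^{-n}$, and conditional on this non-event, mutations involving negated literals are accepted or rejected exactly as the corresponding positive-literal mutations; this event is then carried into the final union bound. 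Your proposal accounts only for the $O(\log n)\cdot 2^{-n}$ generalisation-error overhead of negations, which is the harmless half of the issue; without the conditioning argument (or an equivalent), your assertion that ``the $O(\log n)$ runtime survives unchanged'' is not justified.
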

\begin{proof}
This result can be proven by following the proof of
Theorem~\ref{thm:incomplete-single-run}, supplemented by a number of
observations regarding the utility of adding negated variables into the current
solution.
	
A solution containing both the positive and negative form of some variable and
only problem-appropriate function nodes (i.e., AND for \andn and OR for \orn;
producing a contradiction or a tautology respectively) trivially achieves the
desired polynomially-small generalisation error, as it is only wrong on a
single input out of $2^n$ possible inputs. Typically, however, \RLSGP will
terminate with a solution of desired quality before inserting both the positive
and negative copies of some variable.

Any solution composed of problem-appropriate function nodes and a mix of
positive and negative input variables, but not both the positive and negative
versions of any one variable, is wrong on exactly two more inputs than a
solution which contains only positive versions of the same variables. For the
\andn problem, these two inputs are the all-true input (on which the solution
containing negated variables return false), and the input setting all used leaf
nodes to true by assigning true to variables appearing in positive form, and
false to variables appearing negated, and all remaining variables to true.
Thus, the first insertion of a negated variable (which does not already appear
in the tree in positive form) is only slightly worse than inserting the
corresponding positive variable, while subsequent insertions of negated and
positive variables are equivalent as long as no contradiction or tautology is
produced. The difference is overwhelmingly unlikely to be noticed when sampling
a training set of polynomial size for any polynomial number of iterations:
these two inputs are not selected for any polynomially-sized training set
within a polynomial number of iterations with probability overwhelmingly close
to $1$ by a union bound, and conditioning on this, the \RLSGP is as likely to
accept/reject a mutation involving a negated literal as a positive literal of
the same variable.

For the purposes of Lemmas \ref{lem:incomplete-happy-runtime} and
\ref{lem:incomplete-no-bad-things}, the negated variables can therefore be
treated as being equivalent to their positive counterparts, with any solution
which contains a logarithmic number of distinct variables in either form and
only the problem-appropriate function nodes producing the desired
generalisation error. Thus, with high probability, a tree which contains each
variable in either positive or negative form at most once, contains no
undesired function nodes, and achieves the desired generalisation error can be
constructed within $O(\log n)$ iterations.

Combining these with Lemma~\ref{lem:no-bad-solutions} as in the proof of
Theorem~\ref{thm:incomplete-single-run}, and including the probability of not
sampling a training set capable of distinguishing negations in the final union
bound yields the result.
\end{proof}

\section{Experiments} \label{sec:Experiments}
We performed experiments to complement our theoretical results.
For each choice of algorithm and problem parameters, we performed
500 independent runs of the GP systems and we report the average number of iterations required to find the solution, its average size, as well as standard deviations.

\begin{table}[tp]
	\centering
	\setlength\tabcolsep{1mm}
	\begin{tabular}{|c|c|cc|c|cc|} \hline
    	Experiment & Deletion & $F$ & $L$ & Training Set & Runs & Max Iterations 
	 \\ \hline
 RQ 1 &	leaf  &	$\{AND,OR\}$ &	$\{x_1, \dots, x_n\}$ & Complete & 500 &	10,000	\\
 RQ 2 &	 subtree  &	$\{AND,OR\}$ &	$\{x_1, \dots, x_n\}$ & Complete	& 500 &	10,000 	\\
 RQ 3 &	 subtree &	$\{AND,OR\}$ &	$\{x_1, \ldots, x_n, \overline{x}_1, \ldots, \overline{x}_n\}$ & Complete &	500 &	10,000 	\\
 RQ 4 &	 subtree &$\{AND,OR\}$ &	$\{x_1, \dots, x_n\}$ & Incomplete &	500 &	10,000 	\\
 RQ 5 &	 subtree &$\{AND,OR\}$ &	$\{x_1, \ldots, x_n, \overline{x}_1, \ldots, \overline{x}_n\}$& Incomplete &	500 &	10,000 	\\
 \hline
	\end{tabular} 
\caption{The experimental setup used in each of the five carried out experiments.}
	\label{tab:expsetup}
\end{table}

We will investigate five separate research questions to expand upon the knowledge gained from the theory regarding \RLSGP for evolving conjunctions.
The experimental setup for each research question is reported in Table \ref{tab:expsetup}.

\peter{We first examine the behaviour of the algorithms when they use the complete truth table to evaluate solution quality.}
Theorem \ref{thm:rls-stuck}
showed that using the standard HVL-Prime operator, which applies leaf-only deletion and substitution, can cause \RLSGP with the complete truth table to get stuck on a local
optimum when a tree size limit is imposed, thus leading to infinite expected runtime. However, the theorem does not provide bounds on the probability that this event occurs. 

{\bf Research Question 1}: How likely is it that \RLSGP using leaf-only deletion and the complete truth table gets stuck in a local optimum and how does this probability depend on the tree size limit? 

Table~\ref{tab:leaf-RLS-CTT} summarises the experimental behaviour of \RLSGP.
The experiments confirm that when using small tree size limits, \RLSGP does indeed get stuck on local optima even on small problem sizes, thus motivating the use of sub-tree deletion. Examples of the locally optimal trees constructed during the runs are depicted in Figure \ref{fig:leaf-bad-trees}. However, the probability of getting stuck decreases as $\Tmax$, the limit on the size
of the tree, increases.
Concerning the solution quality, with small tree size limits, the number of redundant variables in the final program decreases at the expense of larger runtimes. For $\Tmax=n$, {\it `exact'} solutions are returned when the algorithm does not get stuck. On the other hand, larger tree size limits (including no limit) lead to smaller expected runtimes at the expense of redundant variables in the produced programs.  

\begin{table}[tp]
	\centering
	\setlength\tabcolsep{1mm}
	\begin{tabular}{|c|ccc|ccc|} \hline
    	& \multicolumn{3}{c}{$\Tmax = n$} & \multicolumn{3}{|c|}{$\Tmax = n+1$} \\
		n & $B$ & $\overline{T}$ & $\overline{S}$ & $B$ & $\overline{T}$ & $\overline{S}$ \\ \hline
 4 &	 0.008 &	46.3 (28.0) &	4.0 (0.0) &	0.002 &	40.9 (21.8) &	4.4 (0.5)	\\
 8 &	 0.002 &	151.8 (91.9) &	8.0 (0.0) &	0.004 &	113.8 (51.5) &	8.6 (0.5)	\\
 12 &	 0.016 &	284.1 (148.2) &	12.0 (0.0) &	0.002 &	214.3 (99.5) &	12.7 (0.5)	\\
 16 &	 0.008 &	469.9 (258.0) &	16.0 (0.0) &	0.010 &	345.8 (161.0) &	16.8 (0.4)	\\
 \hline
		\multicolumn{7}{c}{} \\ \hline
    	&  \multicolumn{3}{c}{$\Tmax = 2n$} & \multicolumn{3}{|c|}{$\Tmax = \infty$} \\
		n & $B$ & $\overline{T}$ & $\overline{S}$ & $B$ & $\overline{T}$ & $\overline{S}$ \\ \hline
 4 &	 0 &	42.5 (25.8) &	5.1 (1.2) &	0 &	38.9 (24.3) &	5.4 (2.0)	\\
 8 &	 0 &	98.8 (49.0) &	11.0 (2.3) &	0 &	95.3 (43.8) &	11.2 (3.0)	\\
 12 &	 0 &	170.7 (99.7) &	17.1 (3.3) &	0 &	160.1 (57.1) &	17.9 (4.5)	\\
 16 &	 0 &	232.5 (80.9) &	23.8 (4.1) &	0 &	235.3 (92.7) &	24.6 (6.0)	\\ \hline
	\end{tabular}
    
\caption{Proportion of runs stuck in a local optimum ($B$), and average runtime
($\overline{T}$) and solution size ($\overline{S}$) of successful runs of the
\RLSGP using leaf-only substitution and deletion with the complete truth table 
to evaluate the solution quality for varying $n$ and $\Tmax$. Standard deviations
appear in parentheses.}
	\label{tab:leaf-RLS-CTT}
\end{table}

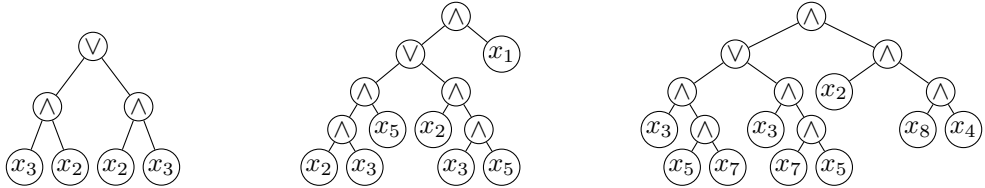
\begin{figure}[tp]
	\hfill
	\begin{tikzpicture}[
		level 1/.style={sibling distance=12mm},
		level 2/.style={sibling distance=6mm}, 
		level distance=8mm,
		every node/.style={draw,circle,inner sep=0.25mm}
	]
	\node {$\vee$}
child { node {$\wedge$}
child { node {$x_3$} }
child { node {$x_2$} } }
child { node {$\wedge$}
child { node {$x_2$} }
child { node {$x_3$} } }
	;
	\end{tikzpicture} \hfill
	\begin{tikzpicture}[
		level 1/.style={sibling distance=12mm},
		level 2/.style={sibling distance=12mm}, 
		level 3/.style={sibling distance=6mm}, 
		level distance=5mm,
		every node/.style={draw,circle,inner sep=0.25mm}
	]
	\node {$\wedge$}
child { node {$\vee$}
child { node {$\wedge$}
child { node {$\wedge$}
child { node {$x_2$} }
child { node {$x_3$} } }
child { node {$x_5$} } }
child { node {$\wedge$}
child { node {$x_2$} }
child { node {$\wedge$}
child { node {$x_3$} }
child { node {$x_5$} } } } }
child { node {$x_1$} };
	\end{tikzpicture} \hfill
    \begin{tikzpicture}[
		level 1/.style={sibling distance=20mm},
		level 2/.style={sibling distance=14mm}, 
		level 3/.style={sibling distance=6mm}, 
		level distance=5mm,
		every node/.style={draw,circle,inner sep=0.25mm}
	]
\node {$\wedge$}
child { node {$\vee$}
child { node {$\wedge$}
child { node {$x_{3}$} }
child { node {$\wedge$}
child { node {$x_{5}$} }
child { node {$x_{7}$} } } }
child { node {$\wedge$}
child { node {$x_{3}$} }
child { node {$\wedge$}
child { node {$x_{7}$} }
child { node {$x_{5}$} } } } }
child { node {$\wedge$}
child { node {$x_{2}$} }
child { node {$\wedge$}
child { node {$x_{8}$} }
child { node {$x_{4}$} } } };
\end{tikzpicture} \hspace*{\fill}
	
	\caption{Examples of locally optimal trees, which cannot be improved by substitution or have any single leaf deleted without affecting fitness, constructed by \RLSGP using leaf-only substitution and deletion operations.}
	\label{fig:leaf-bad-trees}
\end{figure}

We now turn our attention to the 
HVL-Prime operator modified to allow subtree deletion, as considered by
Theorem~\ref{thm:CTT-runtime}. 
The theorem states that the algorithm will find the optimal solution using the complete truth table if an appropriate tree size limit is in place. However, it is unclear from the theory whether the tree size limit is necessary.

{\bf Research Question 2:} How does the tree size limit affect the runtime and solution quality of RLS-GP using subtree deletion and the complete truth table for evolving the conjunction?

As predicted by the theory, \RLSGP never gets stuck in our
experiments when using the complete truth table and a tree size limit. Table~\ref{tab:tree-RLS-CTT} shows the average number of
iterations required to find the global optimum for various problem sizes
and varying tree size limits.
Once again the experiments show that smaller tree size limits lead to lower numbers of redundant variables at the expense of a higher runtime. Larger limits, including no limit at all, lead to faster runtimes at the expense of admitting more redundant variables. Noting that in practical applications a tree size limit is often necessary,
we leave the proof that the algorithm evolves an exact conjunction without any limits on the tree size for future work.

We now add the negated literals to the terminal set to turn our attention to the performance of \RLSGP when equipped with a complete set of functions  and terminals that allow to express any Boolean function. Since it has been proven that negations do not allow to efficiently evolve conjunctions when only the AND function is available to \RLSGP  using the complete truth table~\citep{LissovoiO19}, we have conjectured that the same applies when also the OR function is available to the algorithm.

{\bf Research Question 3:} Can RLS-GP using the fully expressive set of functions and literals evolve the conjunction using the complete truth table?

{\alc
Table~\ref{tab:tree-RLS-CTT-neg} shows the effect of allowing negated literals:
as the problem size increases, \RLSGP is overwhelmingly likely to construct a
negation and not find the global optimum within the allotted runtime of 10,000
iterations} \peter{as we have conjectured.}

\begin{table}[tp]
	\centering
	\setlength\tabcolsep{1.5mm}
	\begin{tabular}{|c|cc|cc|cc|cc|} \hline
    	& \multicolumn{2}{c|}{$\Tmax = n$} & \multicolumn{2}{c}{$\Tmax = n+1$} & \multicolumn{2}{|c}{$\Tmax = 2n$} & \multicolumn{2}{|c|}{$\Tmax = \infty$} \\
		n & $\overline{T}$ & $\overline{S}$ & $\overline{T}$ & $\overline{S}$ & $\overline{T}$ & $\overline{S}$ & $\overline{T}$ & $\overline{S}$ \\ \hline
 4 &	 51.2 & 4.0 &	 42.5 & 4.4 &	 38.8 & 5.1 &	 39.1 & 5.3 \\
 &	 (31.1) & (0.0) &	 (23.5) & (0.5) &	 (20.8) & (1.2) &	 (22.3) & (1.8) \\ \hline
 8 &	 147.5 & 8.0 &	 129.9 & 8.7 &	 93.5 & 11.3 &	 92.3 & 11.6 \\
 &	 (83.3) & (0.0) &	 (69.1) & (0.5) &	 (39.1) & (2.4) &	 (38.1) & (3.0) \\ \hline
 12 &	 325.9 & 12.0 &	 233.4 & 12.8 &	 153.6 & 17.7 &	 151.2 & 18.3 \\
 &	 (184.4) & (0.0) &	 (123.9) & (0.4) &	 (56.6) & (3.1) &	 (50.3) & (3.8) \\ \hline
 16 &	 544.6 & 16.0 &	 377.0 & 16.9 &	 228.3 & 24.5 &	 221.0 & 25.2 \\
 &	 (333.8) & (0.0) &	 (176.0) & (0.4) &	 (74.6) & (3.7) &	 (72.0) & (4.9) \\ \hline
	\end{tabular}
	\caption{
Average runtime ($\overline{T}$) and solution size ($\overline{S}$) of 
\RLSGP using the subtree deletion sub-operation, and the complete truth table 
to evaluate the solution fitness, for varying $n$ and $\Tmax$. Standard deviations
appear in parentheses.
}
	\label{tab:tree-RLS-CTT}
\end{table}

\begin{table}[tp]
	\centering
	\setlength\tabcolsep{1mm}
	\begin{tabular}{|c|ccc|ccc|} \hline
    	& \multicolumn{3}{c}{$\Tmax = n$} & \multicolumn{3}{|c|}{$\Tmax = n+1$} \\
		n & $B$ & $\overline{T}$ & $\overline{S}$ & $B$ & $\overline{T}$ & $\overline{S}$ \\ \hline
4 & 0.000 & 655.3 (638.0) & 4.0 (0.0) & 0.000 & 513.7 (480.8) & 4.5 (0.5) \\ 
8 & 0.998 & 1750.0 (-) & 8.0 (-) & 0.994 & 5535.3 (3066.0) & 8.3 (0.6) \\ 
12 & 1.000 & - & - & 1.000 & - & - \\ 
16 & 1.000 & - & - & 1.000 & - & - \\ 
 \hline
		\multicolumn{7}{c}{} \\ \hline
    	&  \multicolumn{3}{c}{$\Tmax = 2n$} & \multicolumn{3}{|c|}{$\Tmax = \infty$} \\
		n & $B$ & $\overline{T}$ & $\overline{S}$ & $B$ & $\overline{T}$ & $\overline{S}$ \\ \hline
4 & 0.000 & 431.0 (392.7) & 5.7 (1.3) & 0.000 & 422.7 (425.7) & 6.4 (2.5) \\ 
8 & 0.990 & 6443.8 (3789.4) & 11.0 (1.6) & 0.982 & 4478.7 (2371.6) & 12.4 (3.3) \\ 
12 & 1.000 & - & - & 1.000 & - & - \\ 
16 & 1.000 & - & - & 1.000 & - & - \\ \hline
	\end{tabular}
    
\caption{\alc Proportion of runs failing to find the global optimum within $10^4$ iterations ($B$), and average runtime
($\overline{T}$) and solution size ($\overline{S}$) of successful runs of the
\RLSGP using the subtree deletion sub-operation, negated literals, and the complete truth table 
to evaluate the solution fitness, for varying $n$ and $\Tmax$. Standard 
deviations, where available, appear in parentheses.}
	\label{tab:tree-RLS-CTT-neg}
\end{table}

Finally, we examine the behaviour of \RLSGP when using an incomplete
training set \peter{which allows us to evolve bigger conjunctions i.e.,} larger problem sizes.
The results from Theorems~\ref{thm:incomplete-single-run} and~\ref{thm:incomplete-negated-single-run} rely on the algorithm stopping once a logarithmic sampled error is achieved using $F=\{AND,OR\}$ respectively without and with negated literals. However, from the theory it is unclear whether this error threshold is necessary or not.

{\bf Research Question 4:} Is the logarithmic error threshold required by RLS-GP using a training set to evolve the conjunction and how does removing the threshold affect the algorithm's performance using only positive literals?

We run experiments comparing the performance of \RLSGP when stopping at error 0 or stopping earlier for $n=50$. The average runtimes of the two variants are plotted in Figure~\ref{fig:incomplete-training-results}. The figure confirms our theoretical result that the algorithms generally run in logarithmic time and produce solutions that
contain a logarithmic number of leaf nodes with respect to the training set size. Stopping at 0 error leads to better solutions at the expense of higher runtimes.

We now turn our attention to negated literals.

{\bf Research Question 5:} Is the logarithmic error threshold required by RLS-GP using a training set to evolve the conjunction and how does removing the threshold affect the algorithm's performance using both positive and negated literals?

{\alc
Figure~\ref{fig:incomplete-training-results-neg} explores the effect of
expanding the literal set with the negated variables. 
For $n=50$, negations cause a
contradiction to be constructed when a solution contains approximately $8$ leaf
nodes on average, allowing the algorithm to terminate within an average of 50
iterations even for larger training sets.
Figures~\ref{fig:incomplete-training-ORs} and \ref{fig:incomplete-training-ORs-neg}
 show the average number of ORs in the
final program with and without negations in the literal set respectively. While
these are few in number, more disjunctions are present the closer the threshold
on acceptable sampled error is to 0.
}

\begin{figure}[p]
\pgfplotstableread{data/runtime-n50-linf-s0.tsv}\runFiftyInfZero
\pgfplotstableread{data/treesize-n50-linf-s0.tsv}\sizeFiftyInfZero
\pgfplotstableread{data/runtime-n50-linf-s16.tsv}\runFiftyInfSixteen
\pgfplotstableread{data/treesize-n50-linf-s16.tsv}\sizeFiftyInfSixteen
\pgfplotstableread{data/runtime-n50-linf-s8.tsv}\runFiftyInfEight
\pgfplotstableread{data/treesize-n50-linf-s8.tsv}\sizeFiftyInfEight
\pgfplotstableread{data/runtime-n50-linf-s32.tsv}\runFiftyInfThirtytwo
\pgfplotstableread{data/treesize-n50-linf-s32.tsv}\sizeFiftyInfThirtytwo
\begin{tikzpicture}
\begin{semilogxaxis}[
width=0.48\linewidth, height=0.52\linewidth,
ymax=125,
legend pos = north west,
xlabel={Training Set Size (inputs)},
ylabel={Runtime (iterations)},
try min ticks=5,
legend cell align={left},
legend style={nodes={scale=0.8, transform shape}}, 
]
\addplot[cyan,mark=x] table[x index=0, y index = {2}]{\runFiftyInfZero};
\addplot[dgreen,mark=triangle] table[x index=0,y index={1}]{\runFiftyInfEight};
\addplot[orange,mark=o] table[x index=0, y index = {2}]{\runFiftyInfSixteen};
\addplot[dred,mark=square,mark size=1.25pt] table[x index=0,y index={1}]{\runFiftyInfThirtytwo};
\legend{$A=0$,$A=8$,$A=16$,$A=32$}
\end{semilogxaxis}
\end{tikzpicture}\hfill
\begin{tikzpicture}
\begin{semilogxaxis}[
width=0.48\linewidth, height=0.52\linewidth,
ymax=20,
legend pos = north west,
xlabel={Training Set Size (inputs)},
ylabel={Solution Size (leaf nodes)},
try min ticks=5,
legend cell align={left},
legend style={nodes={scale=0.8, transform shape}}, 
]
\addplot[cyan,mark=x] table[x index=0, y index = {2}]{\sizeFiftyInfZero};
\addplot[dgreen,mark=triangle] table[x index=0,y index={1}]{\sizeFiftyInfEight};
\addplot[orange,mark=o] table[x index=0, y index = {2}]{\sizeFiftyInfSixteen};
\addplot[dred,mark=square,mark size=1.25pt] table[x index=0,y index={1}]{\sizeFiftyInfThirtytwo};
\legend{$A=0$,$A=8$,$A=16$,$A=32$}
\end{semilogxaxis}
\end{tikzpicture}
\caption{Average runtime and tree size produced by \RLSGP with subtree deletion, using an incomplete training set, stopping once sampled error is at most $A$, $n=50, \Tmax = \infty$, averaged over 500 independent runs.}
\label{fig:incomplete-training-results}
\end{figure}
\begin{figure}[tp]
\pgfplotstableread{data/fig2x.tsv}\dat
\begin{tikzpicture}
\begin{semilogxaxis}[
 width=0.48\linewidth, height=0.52\linewidth,
 ymax=125,
 legend pos = north west,
 xlabel={Training Set Size (inputs)},
 ylabel={Runtime (iterations)},
 try min ticks=5,
 legend cell align={left},
 legend style={nodes={scale=0.8, transform shape}}, 
]
\addplot[cyan,mark=x]                         table[x index=0,y index=1] {\dat};
\addplot[dgreen,mark=triangle]                table[x index=0,y index=3] {\dat};
\addplot[orange,mark=o]                       table[x index=0,y index=5] {\dat};
\addplot[dred,mark=square,mark size=1.25pt]   table[x index=0,y index=7] {\dat};
\legend{$A=0$,$A=8$,$A=16$,$A=32$}
\end{semilogxaxis}
\end{tikzpicture}\hfill
\begin{tikzpicture}
\begin{semilogxaxis}[
 width=0.48\linewidth, height=0.52\linewidth,
 ymax=20,
 legend pos = north west,
 xlabel={Training Set Size (inputs)},
 ylabel={Solution Size (leaf nodes)},
 try min ticks=5,
 legend cell align={left},
 legend style={nodes={scale=0.8, transform shape}}, 
]
\addplot[cyan,mark=x]                         table[x index=0,y index=2] {\dat};
\addplot[dgreen,mark=triangle]                table[x index=0,y index=4] {\dat};
\addplot[orange,mark=o]                       table[x index=0,y index=6] {\dat};
\addplot[dred,mark=square,mark size=1.25pt]   table[x index=0,y index=8] {\dat};
\legend{$A=0$,$A=8$,$A=16$,$A=32$}
\end{semilogxaxis}
\end{tikzpicture}
\caption{\alc Average runtime and tree size produced by \RLSGP with subtree deletion and negated literals, using an incomplete training set, stopping once sampled error is at most $A$, $n=50, \Tmax = \infty$, averaged over 500 independent runs.}
\label{fig:incomplete-training-results-neg}
\end{figure}
\begin{figure}[tp]
\pgfplotstableread{data/insOR-n50-linf-s0.tsv}\insFiftyInfZero
\pgfplotstableread{data/finOR-n50-linf-s0.tsv}\finFiftyInfZero
\pgfplotstableread{data/insOR-n50-linf-s16.tsv}\insFiftyInfSixteen
\pgfplotstableread{data/finOR-n50-linf-s16.tsv}\finFiftyInfSixteen
\pgfplotstableread{data/insOR-n50-linf-s8.tsv}\insFiftyInfEight
\pgfplotstableread{data/finOR-n50-linf-s8.tsv}\finFiftyInfEight
\pgfplotstableread{data/insOR-n50-linf-s32.tsv}\insFiftyInfThirtytwo
\pgfplotstableread{data/finOR-n50-linf-s32.tsv}\finFiftyInfThirtytwo
\begin{tikzpicture}
\begin{semilogxaxis}[
width=0.48\linewidth, height=0.52\linewidth,
ymax=1,
legend pos = north west,
xlabel={Training Set Size (inputs)},
ylabel={ORs accepted during run},
try min ticks=5,
legend cell align={left},
legend style={nodes={scale=0.8, transform shape}}, 
]
\addplot[cyan,mark=x,only marks] table[x index=0, y index = {2}]{\insFiftyInfZero};
\addplot[dgreen,mark=triangle,only marks] table[x index=0,y index={1}]{\insFiftyInfEight};
\addplot[orange,mark=o,only marks] table[x index=0 y index = {2}]{\insFiftyInfSixteen};
\addplot[dred,mark=square,only marks,mark size=1.25pt] table[x index=0,y index={1}]{\insFiftyInfThirtytwo};
\legend{$A=0$,$A=8$,$A=16$,$A=32$}
\end{semilogxaxis}
\end{tikzpicture}\hfill
\begin{tikzpicture}
\begin{semilogxaxis}[
width=0.48\linewidth, height=0.52\linewidth,
ymax=0.5,
legend pos = north west,
xlabel={Training Set Size (inputs)},
ylabel={ORs in the returned program},
try min ticks=5,
legend cell align={left},
legend style={nodes={scale=0.8, transform shape}}, 
]
\addplot[cyan,mark=x,only marks] table[x index=0, y index = {2}]{\finFiftyInfZero};
\addplot[dgreen,mark=triangle,only marks] table[x index=0,y index={1}]{\finFiftyInfEight};
\addplot[orange,mark=o,only marks] table[x index=0, y index = {2}]{\finFiftyInfSixteen};
\addplot[dred,mark=square,only marks,mark size=1.25pt] table[x index=0,y index={1}]{\finFiftyInfThirtytwo};
\legend{$A=0$,$A=8$,$A=16$,$A=32$}
\end{semilogxaxis}
\end{tikzpicture}
\caption{Number of OR nodes inserted and surviving to the solution
returned by \RLSGP with subtree deletion, using an incomplete training set, stopping once sampled error is at most $A$, $n=50, \Tmax = \infty$, averaged over 500 independent runs. 
}
\label{fig:incomplete-training-ORs}
\end{figure}
\begin{figure}[tp]
\pgfplotstableread{data/fig3x.tsv}\dat
\begin{tikzpicture}
\begin{semilogxaxis}[
width=0.48\linewidth, height=0.52\linewidth,
ymax=1,
legend pos = north west,
xlabel={Training Set Size (inputs)},
ylabel={ORs accepted during run},
try min ticks=5,
legend cell align={left},
legend style={nodes={scale=0.8, transform shape}},
every axis plot/.append style={only marks},
]
\addplot[cyan,mark=x]                         table[x index=0,y index=2] {\dat};
\addplot[dgreen,mark=triangle]                table[x index=0,y index=4] {\dat};
\addplot[orange,mark=o]                       table[x index=0,y index=6] {\dat};
\addplot[dred,mark=square,mark size=1.25pt]   table[x index=0,y index=8] {\dat};
\legend{$A=0$,$A=8$,$A=16$,$A=32$}
\end{semilogxaxis}
\end{tikzpicture}\hfill
\begin{tikzpicture}
\begin{semilogxaxis}[
width=0.48\linewidth, height=0.52\linewidth,
ymax=0.5,
legend pos = north west,
xlabel={Training Set Size (inputs)},
ylabel={ORs in the returned program},
try min ticks=5,
legend cell align={left},
legend style={nodes={scale=0.8, transform shape}}, 
every axis plot/.append style={only marks},
y tick label style={/pgf/number format/fixed}
]
\addplot[cyan,mark=x]                         table[x index=0,y index=1] {\dat};
\addplot[dgreen,mark=triangle]                table[x index=0,y index=3] {\dat};
\addplot[orange,mark=o]                       table[x index=0,y index=5] {\dat};
\addplot[dred,mark=square,mark size=1.25pt]   table[x index=0,y index=7] {\dat};

\legend{$A=0$,$A=8$,$A=16$,$A=32$}
\end{semilogxaxis}
\end{tikzpicture}
\caption{
\alc
Number of OR nodes inserted and surviving to the solution
 returned by \RLSGP with subtree deletion and negated literals, using an incomplete training set, stopping once sampled error is at most $A$, $n=50, \Tmax = \infty$, averaged over 500 independent runs. 
 }
\label{fig:incomplete-training-ORs-neg}
\end{figure}

\section{Conclusion} \label{sec:Conclusion}

We analysed the behaviour of a variant of the \RLSGP algorithm and
proved expected runtime bounds when using the complete truth table to evaluate
the solution quality, as well as when using a polynomial sample of possible inputs
chosen uniformly at random. Equipped with a tree size limit and a mutation operator
capable of deleting entire subtrees, \RLSGP is able to efficiently evolve a Boolean 
function -- \andn, the conjunction of $n$ variables \peter{(or \orn a disjunction of $n$ variables)} -- when given access to both
the binary conjunction and disjunction operators. 

When using the complete truth table to evaluate the quality of solutions, we show
that in expectation, an optimal solution is found within $O(\ell n \log^2 n)$ 
iterations \peter{if the terminal set contains exactly the distinct literals contained in the target function}. 
Experimentally, we see that the GP system is able to find solutions
quicker as $\ell$, the limit on the tree size, increases, suggesting that the
theoretical bound is overly pessimistic in its modelling of the process. Conversely,
solutions with larger tree size limits tend to contain more redundant variables,
suggesting a trade-off between optimisation time and solution complexity.
\peter{However, if unnecessary negated literals are also present in the terminal set, then we show experimentally that the algorithm gets stuck on a local optimum (i.e., a contradiction) with overwhelming probability. We leave a formal proof for future work.}

When sampling a polynomial number of inputs to evaluate the program quality,
the evolved solutions are not exactly equivalent to the target function,
but generalise well: any polynomially small
generalisation error can be achieved by sampling a polynomial number of
inputs uniformly at random in each iteration. 
\peter{This result holds even if the GP system is completely expressive i.e., it can represent any Boolean function of at most $n$ distinct variables}.
Our theoretical results predict that
\RLSGP is usually able to avoid inserting ORs in this setting, which is reflected in
our experimental results.
\peter{However, when the unnecessary negated literals are present, then a contradiction is inserted in the tree with high probability leading to solutions of smaller average sizes i.e., the OFF function, that always returns false, may actually be evolved. Since it returns the correct output on all inputs but one, it generalises well nevertheless.} 

While these results represent a considerable step forward for the theoretical
analysis of GP behaviour, much work remains to be done. In addition to
addressing the open problem of removing the limit on the tree size, the
analysis should be extended to cover 
\peter{the evolution of Boolean conjunctions and disjunctions of arbitrary size i.e., not necessarily using all of the $n$ available variables}. 
\peter{Furthemore,} more complex target function\peter{s should be considered} where \peter{the use of} populations and crossover may
be \peter{essential}.

\paragraph*{Acknowledgements}  Financial support by the Engineering and
Physical Sciences Research Council (EPSRC Grant No. EP/M004252/1) is gratefully
acknowledged. This work was also supported by a public grant as part of the
Investissements d'avenir project, reference ANR-11-LABX-0056-LMH,
LabEx LMH.

\bibliographystyle{apalike}
\bibliography{gp}

\end{document}